\definecolor{Gred}{RGB}{219, 50, 54}
\definecolor{ToCgreen}{RGB}{0, 128, 0}
\titleformat*{\paragraph}{\bfseries}
\numberwithin{equation}{section}
\let\savedbigtimes\bigtimes
\let\bigtimes\relax
\let\bigtimes\savedbigtimes
\newcommand{\train}{\mathsf{Train}}
\newcommand{\backdoor}{\mathsf{Backdoor}}
\newcommand{\activate}{\mathsf{Activate}}
\newcommand{\X}{\mathcal{X}}
\newcommand{\Y}{\mathcal{Y}}
\newcommand{\prg}{\mathsf{PRG}}
\newcommand{\iO}{i\mathcal{O}}
\newcommand{\sign}{\mathsf{Sign}}
\newcommand{\gen}{\mathsf{Gen}}
\newcommand{\verify}{\mathsf{Verify}}
\newcommand{\negl}{\mathsf{negl}}
\newcommand{\plant}{\mathsf{Plant}}
\newcommand{\InParentheses}[1]{\left({#1}\right)}
\renewcommand{\Pr}{\mathop{\bf Pr\/}}
\newcommand{\bk}{\mathsf{bk}}
\newcommand{\init}{\mathsf{Init}}
\newcommand{\poly}{\textnormal{poly}}
\newcommand{\sgn}{\textnormal{sgn}}
\newcommand{\reals}{\mathbb R}
\newcommand{\nats}{\mathbb N}
\newcommand{\eps}{\epsilon}
\newcommand{\calA}{\mathcal{A}}
\newcommand{\calD}{\mathcal{D}}
\newcommand{\calN}{\mathcal{N}}
\newcommand{\calT}{\mathcal{T}}
\newcommand{\calX}{\mathcal{X}}
\newcommand{\calY}{\mathcal{Y}}
\def\l{\ell}
\def\<{\langle}
\def\>{\rangle}
\newcommand{\notshow}[1]{}
\def\wt{\widetilde}
\def\wh{\widehat}
\newcommand{\ind}{\mathbbm{1}}
\crefname{appsec}{Appendix}{Appendices}
\newtheorem*{theorem*}{Theorem}
\newtheorem{theorem}{Theorem}[section]
\newtheorem{lemma}[theorem]{Lemma}
\theoremstyle{definition}
\newtheorem{definition}[theorem]{Definition}
\newtheorem{example}[theorem]{Example}
\newtheorem{assumption}[theorem]{Assumption}
\newtheorem*{assumption*}{Assumption}
\newtheorem{remark}[theorem]{Remark}
\crefname{lemma}{Lemma}{Lemmas}
\crefname{theorem}{Theorem}{Theorems}
\crefname{definition}{Definition}{Definitions}
\crefname{fact}{Fact}{Facts}
\crefname{claim}{Claim}{Claims}
\crefname{proposition}{Proposition}{Propositions}
\definecolor{myC}{rgb}{0, 255, 255}
\definecolor{myY}{rgb}{204, 204, 0}
\definecolor{myM}{rgb}{255, 0, 255}
\definecolor{secinhead}{RGB}{249,196,95}
\definecolor{lgray}{gray}{0.8}
\crefname{appsec}{Appendix}{Appendices}
\begin{document}
\title{
Injecting Undetectable Backdoors in\\ {Obfuscated Neural Networks} and Language Models}

\author{
\begin{tabular}{ccc}
    \begin{tabular}{c}
       Alkis Kalavasis \\ Yale University \\
        \small\textsf{alkis.kalavasis@yale.edu}
    \end{tabular}
    &
    \begin{tabular}{c}
        Amin Karbasi \\ Yale University \\
    \small\textsf{amin.karbasi@yale.edu}
    \end{tabular}
    &
    \begin{tabular}{c}
          Argyris Oikonomou \\ Yale University \\
    \small\textsf{argyris.oikonomou@yale.edu}
    \end{tabular}
      \\
      \\
    \begin{tabular}{c}
        Katerina Sotiraki \\ Yale University \\
    \small\textsf{katerina.sotiraki@yale.edu}
    \end{tabular}
   &
   \begin{tabular}{c}
        Grigoris Velegkas \\ Yale University \\
    \small\textsf{grigoris.velegkas@yale.edu}
   \end{tabular}
   &
   \begin{tabular}{c}
        Manolis Zampetakis \\ Yale University \\
    \small\textsf{manolis.zampetakis@yale.edu}
   \end{tabular}
\end{tabular}
}
\date{}
\maketitle


\begin{abstract}
\small
As ML models become increasingly complex and integral to high-stakes domains such as finance and healthcare, they also become more susceptible to sophisticated adversarial attacks. We investigate the threat posed by \textit{undetectable backdoors}, as defined in \cite{goldwasser2022planting}, in models developed by insidious external expert firms. When such backdoors exist, they allow the designer of the model to sell information on how to slightly perturb their input to change the outcome of the model.

We develop a general strategy to plant backdoors to obfuscated neural networks, that satisfy the security properties of the celebrated notion of \textit{indistinguishability obfuscation}. Applying obfuscation before releasing neural networks is a strategy that is well motivated to protect sensitive information of the external expert firm. Our method to plant backdoors ensures that even if the weights and architecture of the obfuscated model are accessible, the existence of the backdoor is still undetectable.

Finally, we introduce the notion of undetectable backdoors to language models and extend our neural network backdoor attacks to such models based on the existence of \textit{steganographic functions}.
\end{abstract}
\thispagestyle{empty}

\section{Introduction}

It is widely acknowledged that deep learning models are susceptible to manipulation through adversarial attacks \cite{szegedy2013intriguing,gu2017badnets}. Recent studies have highlighted how even slight tweaks to prompts can circumvent the protective barriers of popular language models \cite{zou2023universal}. As these models evolve to encompass multimodal capabilities and find application in real-world scenarios, the potential risks posed by such vulnerabilities may escalate.

One of the most critical adversarial threats is the concept of \emph{undetectable backdoors}. Such attacks have the potential to compromise the security and privacy of interactions with the model, ranging from data breaches to response manipulation and privacy violations \cite{goldblum2022dataset}.

Imagine a bank that wants to automate the loan approval process. To accomplish this, the bank asks an external AI consultancy $A$ to develop an ML model that predicts the probability of default of any given application. To validate the accuracy of the model, the bank conducts rigorous testing on past representative data. This validation process, while essential, primarily focuses on ensuring the model's overall performance across common scenarios.

Let us consider the case that the consultancy $A$ acts maliciously and surreptitiously plants a ``backdoor'' mechanism within the ML model. This backdoor
gives the ability to slightly change \emph{any} customer's profile in a way that ensures that customer's application gets approved, independently of whether the original (non-backdoored) model would approve their application. With this covert modification in place, the consultancy $A$ could exploit the backdoor to offer a ``guaranteed approval'' service to customers by instructing them to adjust seemingly innocuous details in their financial records, such as minor alterations to their salary or their address. Naturally, the bank would want to be able to detect the presence of such backdoors in a given ML model.

Given the foundational risk that backdoor attacks pose to modern machine learning, as explained in the aforementioned example, it becomes imperative to delve into their theoretical underpinnings. Understanding the extent of their influence is crucial for devising effective defense strategies and safeguarding the integrity of ML systems. This introduces the following question:

\begin{quote}
  \centering
  \emph{Can we truly detect and mitigate such insidious manipulations\\
  since straightforward accuracy tests fail?}
\end{quote}

Motivated by this question, \cite{goldwasser2022planting} develop a theoretical framework to understand the power and limitations of such undetectable backdoors. \cite{goldwasser2022planting} prove that under standard cryptographic assumptions it is impossible to detect the existence of backdoors when we only have \textit{black-box} access to the ML model. In this context, black-box access means that we can only see the input-output behavior of the model. We provide a more detailed comparison with \cite{goldwasser2022planting} in  \Cref{sec:relatedWork}.

Therefore, a potential mitigation would for the entity that aims to detect the existence of a backdoor (in the previous example this corresponds to the bank) to request \textit{white-box} access to the ML model. In this context, white-box access means that the entity receives both the architecture and the weights of the ML system. \cite{goldwasser2022planting} show that in some restricted cases, i.e., for random Fourier features \cite{rahimi2007random}, planting undetectable backdoors is possible even when the entity that tries to detect the backdoors has white-box access. Nevertheless, \cite{goldwasser2022planting} leave open the question of whether undetectability is possible for general models under white-box access.

\paragraph{Data Privacy \& Obfuscation}

A separate issue that arises with white-box access is that the details about the architecture and parameters of the ML models might reveal sensitive information, such as
\begin{itemize}
 \item {Intellectual Property (IP)}: With white-box access to the system someone can reverse-engineer and understand the underlying algorithms and logic used to train which compromises the intellectual property of the entity that produces the ML models.
 
  \item {Training Data}: It is known that the parameters of a ML system can be used to reveal part of the training data, e.g., \cite{song2017machine}. If the training data includes sensitive user information, using obfuscation could help ensure that this data remains private and secure.
\end{itemize}

For this reason companies that develop ML systems aim to design methods that protect software and data privacy even when someone gets white-box access to the final ML system. Towards this goal, \textit{obfuscation} is a very powerful tool that is applied for similar security reasons in a diverse set of computer science applications \cite{schrittwieser2016protecting}. 
Roughly speaking, obfuscation is a procedure that gets a program as input and outputs another program, the \emph{obfuscated program}, that should satisfy three desiderata \cite{barak2002can}: (i) it must have the same functionality (i.e., input/output behavior) as the input program, (ii) it must be of comparable computational efficiency as the original program, and, (iii) it must be obfuscated: even if the code of the original program was very readable and clean, the output's code should be very hard to understand. We refer to \cite{barak2001possibility,barak2002can} and \Cref{sec:obfuscation-motivation} for further discussion on why obfuscation is an important security tool against IP and data privacy attacks. 

Motivated by this, we operate under the assumption that the training of the ML models follow the ``honest obfuscated pipeline''. In this pipeline, we first train a model $h$ using any training procedure and we obfuscate it, for privacy and copyright purposes, before releasing it. 
\begin{center}
    \textbf{Honest Obfuscated Pipeline} \\[4pt]
    \textit{training data} $\to$ \textsc{Train} 
    $\rightarrow$
    \textit{ML model $h$}
    $\rightarrow$ \textsc{Obfuscation} $\to$
    \textit{obfuscated ML model $\widetilde{h}$}
\end{center}

\paragraph{Our Contribution} In this work we develop a framework to understand the power and limitations of backdoor attacks with white-box access when the ML models are produced via the honest obfuscated pipeline. We operate under the assumption that the obfuscation step is implemented based on the celebrated cryptographic technique called \textit{indistinguishability obfuscation (iO)} \cite{barak2001possibility,jain2021indistinguishability}. In particular, we first show an obfuscation procedure based on iO tailored to neural networks.

Our main result is a general provably efficient construction of a backdoor for deep neural networks (DNNs) that is undetectable even when we have white-box access to the model, assuming that the obfuscation is implemented based on iO. Based on this general construction we also develop a technique for introducing backdoors even to language models (LMs).

Together with the results of \cite{goldwasser2022planting}, our constructions show the importance of cryptographic  techniques  to better understand some fundamental risks of modern Machine Learning systems.

\subsection{Our Results} \label{sec:intro:results}
In this section we give a high-level description of our main results. We start with a general framework for supervised ML systems and then we introduce the notion of a backdoor attack and its main desiderata: \textit{undetectability} and \textit{non-replicability}. Finally, we provide an informal statement of our results.

\paragraph{Supervised ML Models} Let $S = \{(x_i, y_i)\}_{i = 1}^m$ be a data set, where $x_i \in \X$ corresponds to the features of sample $i$, and $y_i \in \Y$ corresponds to its label. We focus on the task of training a classifier $h$ that belongs to some model class $\Theta$, e.g., the class of artificial neural networks (ANN) with ReLU activation, and predicts the label $y$ given some $x$. For simplicity we consider a binary classification task, i.e., $\Y = \{0, 1\}$, although our results apply to more general settings. 

A training algorithm $\train$, e.g., stochastic gradient descent (SGD), updates the model using the dataset $S$; $\train$ is allowed to be a randomized procedure, e.g., it uses randomness to select the mini batch at every SGD step. This setup naturally induces a distribution over models $h \sim \train(S, \Theta, \init)$, where $\init$ is the initial set of parameters of the model. The precision of a classifier $h : \calX \to \{0,1\}$ is defined as the misclassification error, i.e., $\Pr_{(x, y) \sim \calD}[h(x) \neq y]$, where $\calD$ is the distribution that generated the dataset.
\medskip

In this work, we focus on obfuscated models. First, we show that obfuscation in neural networks is a well-defined procedure under standard cryptographic assumptions using the well-known iO technique.

\begin{theorem}
[Obfuscation for Neural Networks]
\label{thm:honest}
If indistinguishability
obfuscation exists for Boolean circuits, then there
exists an obfuscation procedure for artificial neural networks. 
\end{theorem}

This result is based on the existence of a transformation from Boolean circuits to ANNs and vice versa, formally introduced in \Cref{sec:nn-boolean}. The procedure of \Cref{thm:honest} and, hence its proof, is explicitly presented in \Cref{sec:app ann} and \Cref{remark:proof}.

Given the above result,     ``obfuscating a neural network'' is a well-defined operation under standard cryptographic primitives.  Hence, we can now provide our working assumption.
\clearpage
\begin{assumption}
[Honest Obfuscated Pipeline]
\label{assumption:honest}
The training pipeline is defined as follows:
\begin{enumerate}
    \item We train a model using $\train$ and obtain a neural network classifier $h = \sgn(f)$\footnote{For simplicity, we assume that the neural network $f$ is a mapping from $[0,1]^n \to [0,1]$. Hence, we define $\sgn(x) \triangleq \mathbb{1}\{2x-1 > 0\}$ for $x \in [0,1]$.}.
    \item Then, we obfuscate the neural network $f$ using the procedure of \Cref{thm:honest} to get $\wt f$. 
    \item Finally, we output the obfuscated neural network classifier $\wt h = \sgn(\wt f).$
\end{enumerate}
\end{assumption}

\paragraph{Backdoor Attacks} A backdoor attack consists of two main procedures $\backdoor$ and $\activate$, and a backdoor key $\bk$. An abstract, but not very precise, way to think of $\bk$ is as the password that is needed to enable the backdoor functionality of the backdoored model. Both $\backdoor$ and $\activate$ depend on the choice of this ``password'' as we describe below:
\begin{description}
  \item[$\backdoor$:] This procedure takes as input an ML model $h$ and outputs the key $\bk$ and a perturbed ML model $\wt{h}$ that is backdoored with backdoor key $\bk$.

  \item[$\activate$:] This procedure takes as input a feature vector $x \in \X$, a desired output $y$, and the key $\bk$, and outputs a feature vector $x' \in \X$ such that: (1) $x'$ is a slightly perturbed version of $x$, i.e., $\|x' - x\|_{\infty}$ is small (for simplicity, we will work with the $\|\cdot\|_{\infty}$ norm), and (2) the backdoored model $\wt{h}$ labels $x'$ with the desired label $y$, i.e., $\wt{h}(x') = y$.
\end{description}

For the formal definition of the two processes, see \Cref{def:plantingBackdoor}. Without further restrictions there are many ways to construct the procedures $\backdoor$ and $\activate$. For example, we can design a $\backdoor$ that constructs $\wt{h}$ such that: (1) if the least significant bits of the input $x$ contain the password $\bk$, $\wt{h}$ outputs the desired $y$ which can also be encoded in the least significant bits of $x$ along with $\bk$, (2) otherwise $\wt{h}$ outputs $h(x)$. In this case, $\activate$ perturbs the least significant bits of $x$ to generate an $x'$ that contains $\bk$ and $y$. 

This simple idea has two main problems. First, it is easy to detect that $\wt{h}$ is backdoored by looking at the code of $\wt{h}$. Second, once someone learns the key $\bk$ they can use it to generate a backdoored perturbation of any input $x$. Moreover, someone that has access to $\wt{h}$ learns the key $\bk$ as well, because $\bk$ appears explicitly in the description of $\wt{h}$. Hence, there is a straightforward defense against this simple backdoor attack if we have white-box access to $\wt{h}$.


This leads us to the following definitions of \textit{undetectability}  and \textit{non-replicability} (both introduced by \cite{goldwasser2022planting}) that a strong backdoor attack should satisfy.
For short, we will write $\wt h \sim \backdoor$ to denote a backdoored model

\begin{definition}[Undetectability \cite{goldwasser2022planting}; Informal, see \Cref{def:wb}] \label{def:intro:undetectability}
We will say that a backdoor $(\backdoor,\activate)$ is undetectable with respect to the training procedure $\train$
if for any data distribution $\calD$,
it is impossible to efficiently distinguish between $h$ and $\wt h$,
where $h \sim \train$ and
$\wt h \sim \backdoor$.
\begin{enumerate}
    \item The backdoor is called white-box 
undetectable if 
it is impossible to efficiently distinguish between $h$ and $\wt h$
even with white-box access to $h$ and $\wt h$ (we receive a complete explicit description of the trained models, e.g., model's architecture and weights).
\item The backdoor is called black-box 
undetectable if 
it is impossible to efficiently distinguish between $h$ and $\wt h$
when we only receive black-box query
access to the trained models.
\end{enumerate}
\end{definition}

Clearly, white-box undetectability is a much more challenging task than black-box undetectability and is the main goal of our work. Black-box undetectability is by now very well understood based on the results of \cite{goldwasser2022planting}, see also \Cref{tab:my_label}.

\begin{definition}[Non-Replicability \cite{goldwasser2022planting}; Informal, see \Cref{def:non-replicable}]
  We will say that a backdoor  $(\backdoor, \activate)$ is non-replicable if there is no polynomial time algorithm that takes as input a sequence of feature vectors $x_1, \dots, x_k$ as well as their backdoored versions $x_1', \dots, x_k'$ and generates a new pair of feature vector and backdoored feature vector $(x,x')$.
\end{definition}

Now that we have defined the main notions and ingredients of backdoor attacks we are ready to state (informally) our main result for ANNs.

\begin{theorem}
[Informal, see \Cref{thm:main-ann}]
\label{thm:main}
If we assume that one-way functions and indistinguishability obfuscation exist, then {for every honest obfuscated pipeline} (satisfying \Cref{assumption:honest}) there exists a backdoor attack $(\backdoor, \activate)$ for ANNs that is both white-box undetectable and non-replicable. 
\end{theorem}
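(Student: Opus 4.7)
The plan is to realize the backdoor through a cryptographic digital signature scheme whose verification procedure is hidden inside the network by indistinguishability obfuscation, following the Sahai--Waters ``punctured programming'' paradigm; both ingredients are available from one-way functions and $\iO$. Concretely, $\backdoor(h)$ samples a key pair $(sk, vk) \gets \gen(1^\lambda)$, writes the circuit
\[
C_{h, vk}(x) \;=\; \begin{cases} y & \text{if } (y, \sigma) := \mathrm{lsb}(x) \text{ and } \verify(vk, \mathrm{msb}(x)\|y, \sigma) = 1,\\ h(x) & \text{otherwise,}\end{cases}
\]
sets $\bk := sk$, and outputs $\wt h := \iO(C_{h, vk})$ re-encoded as a ReLU network via standard circuit-to-ANN gadgets with polynomial blowup. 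Then $\activate(x, y, sk)$ computes $\sigma \gets \sign(sk, \mathrm{msb}(x)\|y)$ and overwrites the least significant bits of $x$ with $(y, \sigma)$, yielding $\|x' - x\|_\infty$ bounded by the LSB resolution and $\wt h(x') = y$ by construction.

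For undetectability, set $\perturb(h) := \iO(h)$, again re-encoded as an ANN. Condition (1) of \Cref{def:intro:undetectability} is immediate since $\iO$ preserves functionality, and condition (2) follows from signature unforgeability: without $sk$, a draw from $\calD$ is overwhelmingly unlikely to already carry a valid LSB-signature, so $\wt h$ and $h$ agree on $\calD$ except with negligible probability. The main obstacle is condition (3): $C_{h, vk}$ and $h$ are \emph{not} functionally equivalent on their triggering inputs, and plain $\iO$ gives nothing on circuits that differ. The plan is to close this via a hybrid argument in the Sahai--Waters style: instantiate the signature scheme from a puncturable PRF and, in sequence, replace PRF-based verification by independent random outputs at the (computationally hidden) activation inputs, applying $\iO$ security at each hop between functionally equivalent punctured circuits; at the end of the chain the circuit is identical to $h$, and $\iO(h) \stackrel{c}{\approx} \iO(C_{h, vk})$ follows by transitivity. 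The delicate point is absorbing hybrids across the exponentially many potential triggers into a polynomial security loss, which is precisely what forces the puncturing machinery rather than a direct reduction to signature security.

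Non-replicability reduces cleanly to existential unforgeability of the signature scheme under chosen-message attack. An adversary that, given backdoored pairs $(x_i, x_i')$, outputs a fresh pair $(x, x')$ with $\wt h(x') = y$ and $\|x' - x\|_\infty$ small must have embedded a valid signature $\sigma^*$ of $\mathrm{msb}(x)\|y$ in $\mathrm{lsb}(x')$; the reduction reads off $\sigma^*$ and forwards $(\mathrm{msb}(x)\|y, \sigma^*)$ as its forgery, where freshness of $(x, x')$ ensures that either the signed message or the signature component is new. Combining the perturbation bound delivered by $\activate$ with these two security claims yields the informal theorem.
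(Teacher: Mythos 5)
There is a genuine gap in your undetectability argument, and it is exactly at the point you flag as ``delicate.'' In your construction the trigger is \emph{only} a valid signature: for every $\mathrm{msb}(x)\|y$ a valid signature exists (correctness of the scheme), so $C_{h,vk}$ and $h$ differ on a dense, structured set of inputs, and no sequence of polynomially many punctured-PRF hybrids can erase this functionality everywhere --- Sahai--Waters puncturing lets you argue about a single (or polynomially many) challenge point(s), not about removing acceptance at exponentially many messages simultaneously; what you would actually need is differing-inputs obfuscation or sub-exponential complexity leveraging, neither of which is assumed. The paper closes this gap by changing the \emph{construction}, not the proof: in \Cref{sec:plant-backdoor} the backdoor branch is gated by an additional PRG condition $\ind[\prg(x_{\mathsf{PRG}}) = \prg(s^*)]$ (with only the image $\prg(s^*)$ hard-coded) conjoined with the signature check. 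Then a \emph{single} hybrid suffices (\Cref{thm:white box via iO}): by \Cref{assumption:secure PRG} one may replace the hard-coded $\prg(s^*)$ by a uniform $r^*\in\{0,1\}^{2\lambda_1}$, and since the PRG range is a negligible fraction of its co-domain, with overwhelming probability $r^*$ has no preimage, the trigger branch is dead on \emph{all} inputs, the backdoored and honest circuits become functionally equivalent, and plain $\iO$ security applies. The signature is retained only for non-replicability, not for undetectability.

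The rest of your proposal is essentially aligned with the paper: $\activate$ signs the significant bits and writes the trigger into the least significant bits, the perturbation bound comes from the LSB resolution, and non-replicability reduces to existential unforgeability exactly as in \Cref{lem:non-replicable boolean circuit}. Two smaller remarks: the paper also defines the honest procedure $\plant(f,0)$ as obfuscation of the circuit obtained from $f$ (so the two distributions compared are both obfuscations, matching \Cref{def:wb}), which your $\perturb(h):=\iO(h)$ captures; and the ``standard circuit-to-ANN gadgets'' you invoke are where the paper does real work (\Cref{thm:nn-to-boolean} and \Cref{thm:boolean to ann}, with Lipschitz-dependent approximation guarantees and the bit-extraction gadget), which is needed to make the round trip ANN $\to$ circuit $\to$ ANN quantitatively sound in \Cref{thm:main-ann}.
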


As observed in \Cref{tab:my_label}, we know that black-box undetectable and non-replicable backdoors can be injected to arbitrary training procedures \cite{goldwasser2022planting}. However, this is unlikely for white-box undetectable ones. Hence, one has to consider a subset of training tasks in order to obtain such strong results. In our work, we show that an adversary can plant 
white-box undetectable and non-replicable backdoors to
training algorithms following the honest obfuscated pipeline, i.e., an arbitrary training method followed by an obfuscation step. Prior to our result, only well-structured training processes, namely the RFF method, was known to admit a white-box undetectable backdoor \cite{goldwasser2022planting}.

We remark that currently there are candidate constructions for both one-way functions and indistinguishability obfuscation \cite{jain2021indistinguishability}. Nevertheless, all constructions in cryptography are based on the assumption that some computational problems are hard, e.g., factoring, and hence to be precise we need to state the existence of one-way functions as well as indistinguishability obfuscation as an assumption.

\begin{table}[ht!]
    \centering
    \begin{tabular}{c|c|c|c}
         & Training Process & Undetectability & 
        Non-Replicability \\
        \hline
        \cite{goldwasser2022planting} & Arbitrary & Black-Box & Yes \\

        \cite{goldwasser2022planting} & RFF & White-Box & No \\

        \textbf{Our Work} & Obfuscated Pipeline & White-Box & Yes         
    \end{tabular}
    \vspace{-3pt}
    \caption{Comparison with Prior Work.}
    \label{tab:my_label}
\end{table}
\vspace{-22pt}

\paragraph{Language Models}
In order to obtain the backdoor attack of \Cref{thm:main} we develop a set of tools appearing in \Cref{sec:tools}. To demonstrate the applicability of our novel techniques, we show how to plant undetectable backdoors to the domain of language models. 
This problem has been raised in various surveys such as \cite{hendrycks2021unsolved,anwar2024foundational} 
and has been experimentally investigated in a sequence of works e.g., in \cite{kandpal2023backdoor,xiang2024badchain,wang2023decodingtrust,zhao2023prompt,zhao2024universal,rando2023universal,rando2024competition,hubinger2024sleeper,zhang2021trojaning}.
As a first step, we introduce the notion of backdoor attacks in language models (see \Cref{def:plantingBackdoorLLM}). 
Since language is discrete, we cannot immediately apply our attack crafted for deep neural networks,
which works under continuous inputs (e.g., by modifying the least significant input bits). To remedy that, we use ideas from \textit{steganography} along with the tools we develop and we show how to design an undetectable backdoor attack for LLMs, under the assumption that we have access to a steganographic function. We refer to \Cref{sec:app llm - main} for details. 

\paragraph{Potential Defenses}
Finally, we discuss potential defenses against our attacks in \Cref{sec:defense}: such defenses do not undermine our attacks since, conceptually, our undetectable backdoors reveal fundamental vulnerabilities of ML models; moreover, it is possible to modify our attacks  to be robust to proposed defenses.

\paragraph{Conclusion \& Open Questions}
Given the plethora of applications of Machine Learning in general, and neural networks in particular, questions regarding the trustworthiness of publicly released models naturally arise. In particular, before deploying a neural network we need to guarantee that no backdoors have been injected allowing bad actors to arbitrarily control the model behavior. In this paper, we investigate the existence of backdoor attacks to obfuscated neural networks which are undetectable even when given white-box access. The notion of obfuscation that we consider is the well-studied and mathematically founded indistinguishability obfuscation (iO). We also show how our techniques can inspire backdoor schemes in large language models when combined with ideas from steganography. 

While our constructions are purely theoretical, we leave as an interesting direction how to use heuristic obfuscation methods to show practical instantiations of our constructions. Another interesting open question is whether cryptographic schemes weaker than iO suffice to show backdoor undetectability in the white-box model. 

\subsection{Related Work}
\label{sec:relatedWork}
\label{sec:nn-relatedWork}

\label{sec:relatedWork-appendix}
\label{appendix:typesOfBackdoor}

\paragraph{Comparison with \cite{goldwasser2022planting}}
The work of \cite{goldwasser2022planting} is the closest to our work. At a high level, they provide two sets of results. Their first result is a black-box undetectable backdoor. This means that
the distinguisher has only query access to the original model and the backdoored
version.
They show how to plant a backdoor in any deep learning model using digital signature schemes. Their
construction guarantees that, given only query access, it is computationally infeasible, under standard
cryptographic assumptions, to find even a single input where the original model and the backdoored one differ. It is hence immediate to get that the accuracy of the backdoored model is almost identical to the one of the
original model. Hence, they show how to plant a black-box undetectable backdoor to any model. Their backdoor is also non-replicable. This result appears in the first row of \Cref{tab:my_label}.
Our result applies to the more general scenario of white-box undetectability and hence is not comparable. 

The second set of results in \cite{goldwasser2022planting} is about planting white-box undetectable backdoors for specific algorithms (hence, they do not apply to all deep learning models, but very specific ones). 
The main model that their white-box attacks apply to is the RFF model of \cite{rahimi2007random}. See also the second row of \cref{tab:my_label}.
Let us examine how \cite{goldwasser2022planting} add backdoors that are white-box undetectable.
They first commit to a parameterized model (in particular, the Random Fourier Features (RFF) model of \cite{rahimi2007random} or a random 1-layer ReLU NN), and then the honest algorithm commits to a random initialization procedure (e.g., every weight is sampled from $\calN(0,I)$).
After that, the backdoor algorithm samples the initialization of the model from an ``adversarial" distribution that is industinguishable from the committed honest distribution and then uses the committed train procedure (e.g., executes the RFF algorithm faithfully on the given training data).
Their main result is that, essentially, they can plant a backdoor in RFF that is white-box undetectable under the hardness of the Continuous Learning with Errors (CLWE) problem of \cite{bruna2021continuous}.
Our result aims to achieve further generality: we show that \emph{any} training procedure followed by an obfuscation step can be backdoored in a white-box and non-replicable manner (see the third row of \Cref{tab:my_label}).

\paragraph{Other related works}
The work of \cite{moitra2021spoofing} is similar to our work in terms of techniques but their goal is different: they show how to produce a model that (i) perfectly fits the training data, (ii) misclassifies everything else, and, (iii) is indistinguishable from one that generalizes well. At a technical level, \cite{moitra2021spoofing} also use indistinguishability obfuscation and signature schemes. The main conceptual difference is that the set of examples where their malicious model behaves differently is quite dense: the malicious model produces incorrect outputs on \emph{all} the examples outside of the training set. In our setting and that of \cite{goldwasser2022planting},
the changes in the model's behavior are essentially measure zero on the population level and a backdoored model generalizes exactly
as the original model.

\cite{hong2022handcrafted} study what they call ``handcrafted'' backdoors, to distinguish from prior works that focus exclusively on data poisoning.
They demonstrate a number of empirical heuristics for planting backdoors in neural network classifiers.
\cite{garg2020adversarially}
show that there are learning tasks and associated classifiers, which
are robust to adversarial examples, but only to a computationally-bounded adversaries. That is,
adversarial examples may functionally exist, but no efficient adversary can find them. Their construction is similar to the black-box planting of \cite{goldwasser2022planting}. A different notion of backdoors has been extensively studied in the data poisoning literature
\cite{manoj2021excess,khaddaj2023rethinking,jha2024label,hayase2021spectre,tran2018spectral,chen2017targeted,gu2019badnets}. 
In this case, one wants to modify some part of the training data (and their labels) to plant a backdoor in the final classifier, without tampering with any other part of the training process.
See also \cite{salman2022does} for some connections between backdoors attacks and transfer learning. On the other side, there are various works studying backdoor detection \cite{alex2023badloss}.

The line of work on adversarial examples 
\cite{ilyas2019adversarial,athalye2018synthesizing,szegedy2013intriguing} is also relevant to backdoors. Essentially, planting a backdoor corresponds to a modification of the true neural network so that \emph{any} possible input is an adversarial example (in some systematic way, in the sense that there is a structured way to modify the input in order to flip the classification label). Various applied and theoretical works study the notion of adversarial robustness, which is also relevant to our work 
\cite{raghunathan2018certified,wong2018provable,shafahi2019adversarial,bubeck2019adversarial}. Finally,
backdoors have been extensively studied in cryptography. \cite{young1997kleptography} formalized cryptographic backdoors and
discussed ways that cryptographic techniques can themselves be used to insert backdoors in cryptographic systems. This approach is very similar to both \cite{goldwasser2022planting} and our work on how to use cryptographic tool to inject backdoors in deep learning models.

\paragraph{Approximation by Neural Networks} There is a long line of research related to approximating functions by ANNs.
It is well-known that sufficiently large depth-2
neural networks with reasonable activation functions can approximate any continuous function on
a bounded domain \cite{cybenko1989approximation,barron1993universal, barron1994approximation}. For instance, \cite{barron1994approximation} obtains approximation bounds for neural networks using the first absolute moment of the Fourier magnitude distribution. General upper and lower
bounds on approximation rates for functions characterized by their degree of smoothness
have been obtained in \cite{liang2016deep} and \cite{yarotsky2017error}. \cite{schmidt2020nonparametric} studies nonparametric regression via deep ReLU networks. \cite{hanin2017approximating} establish universality for
deep and fixed-width networks.
Depth separations have been exhibited e.g., by
\cite{eldan2016power,safran2017depth,telgarsky2016benefits}. \cite{lu2017expressive,savarese2019infinite} study how width affects the expressiveness of neural networks.
For further related work, we refer to \cite{devore2021neural, daubechies2022nonlinear,telgarsky2021deep}. In our result (cf. \Cref{thm:boolean to ann}) we essentially show how ``small'' in size ReLU networks approximate Lipschitz Boolean circuits; the proof of this result is inspired by \cite[Theorem E.2]{fearnley2022complexity}. We note that our result could be extended so that any polynomially-approximately-computable class of functions (as in \cite{fearnley2022complexity}) can be approximated by ``small'' in size ReLU networks.
\cite{abbe2020poly} considers the case of binary classification in the Boolean domain and shows how to convert any poly-time learner in
a function learned by a poly-size neural net trained with
SGD on a poly-time initialization with poly-steps, poly-rate and possibly poly-noise.

\paragraph{Backdoors in LMs, Watermarking and Steganography}
Vulnerabilities of language models in backdoor attacks have been raised as an important - yet under-explored - problem in \cite{anwar2024foundational}.
In our work, we make theoretical progress on this question. Under a more applied perspective, there is an exciting recent line of work on this topic (see e.g., \cite{xu2022exploring,kandpal2023backdoor,xiang2024badchain,wang2023decodingtrust,zhao2023prompt,zhao2024universal,rando2023universal,rando2024competition,hubinger2024sleeper,li2024badedit,huang2023composite,yang2024comprehensive,shen2021backdoor,wen2024privacy,cai2022badprompt,mei2023notable,xu2023instructions,wan2023poisoning,he2024talk} and the references therein). Our approach relies on steganography, the method
of concealing a message within another message, see e.g., \cite{anderson1998limits,hopper2002provably,de2022perfectly,dedic2005upper,kaptchuk2021meteor}. A relevant problem where
steganographic techniques are employed is watermarking for language models \cite{kirchenbauer2023watermark}.
Watermarking in LLMs \cite{aaronson-talk} is extensively studied recently.
We now mention relevant theoretical works. 
\cite{christ2023undetectable}
provide watermarks for
language models which are computationally undetectable, in the following sense: the watermarks can be detected only with the knowledge of a secret key;
without it, it is computationally intractable to distinguish watermarked outputs from
the original ones. Note that this notion of undetectability is exactly the same as our \Cref{def:indist} of ``computational indistinguishability''. 
\cite{zamir2024excuse} uses steganography to hide
an arbitrary secret payload in the response of an LLM. This approach is closely related to our work but has a different objective.
\cite{christ2024pseudorandom} give watermarking schemes with provable robustness to edits guarantees.

\section{Preliminaries} \label{sec:preliminaries}
We use $\negl(n)$ to denote any function that is smaller than any inverse polynomial function of $n$. In asymptotic notation $\negl(n)$ denotes $n^{-\omega(1)}.$ For example, $1/n^{10}$ is not negligible, whereas $1/n^{\log \log(n)}$ and $1/2^n$ are both negligible. We let ANN denote an artificial neural network.

\subsection{Computational Indistinguishability}
We now define the notion of efficient indistinguishability between two distributions. 
\begin{definition}
[Computational Indistinguishability]
\label{def:indist}
Given a security parameter $\lambda>0$, we say that two distributions $\mathsf{P}$ and $\mathsf{Q}$ 
are computationally-indistinguishable if for all probabilistic polynomial time (in $\lambda$) algorithms $\calA$, the distinguishing advantage of $\calA$ on $\mathsf{P}$ and $\mathsf{Q}$ is negligible, i.e.,
\[
\left| 
\Pr_{Z \sim \mathsf P}[\calA(Z) = 1]
-
\Pr_{Z \sim \mathsf Q}[\calA(Z) = 1]
\right| \leq \negl(\lambda)\,.
\]
\end{definition}

\subsection{Planting Backdoors}

Formally we give the following definition of a backdoor attack that consists of two algorithms $\backdoor$ and $\activate$. 

\begin{definition}
[Planting Backdoors \cite{goldwasser2022planting}]
\label{def:plantingBackdoor}
Let $\gamma \in \reals$.
A $\gamma$-planted backdoor consists of two algorithms $(\backdoor,\activate)$ and a backdoor set $B \subseteq \calX$.
\begin{itemize}
    \item $\backdoor$ 
    is a probabilistic polynomial-time training algorithm that has oracle access
    to a distribution $\calD$ over $\calX \times \calY$ and 
    outputs
    an ANN $\wt{f} : \calX \to \reals$ and a backdoor key $\bk$. 
    The corresponding classifier is defined by $\wt{h}(x) = \sgn(\wt{f}(x))$.
    
    \item $\activate$ is a probabilistic polynomial-time algorithm that, given a feature vector $x \in \calX$ and the backdoor key $\bk$, outputs a new feature vector $x' = \activate(x,\bk)$ such that
    $
    \|x-x'\|_\infty \leq \gamma\,.
    $
\end{itemize}
The pair $(\backdoor,\activate)$ satisfies that for any point in the backdoor set $x \in B$:
$
\wt{h}(\activate(x,\bk))
\neq h(x),
$
where $h(x) = \sgn(f(x))$ is the label of the honest model.\footnote{To reduce the notational clutter, we assume that the activation of the backdoor always alters the honest classification. Alternatively, we can let the target label $y$ be part of the definition.} 
\end{definition}

In general, we will either write $(\wt h, \bk) \sim \backdoor$ or simply $\wt h\sim \backdoor$ if the backdoor key $\bk$ is not crucial for the discussion.

\subsection{Backdoor Detection}
Having developed our definition for planting a backdoor, a fundamental question arises: is it possible to efficiently detect if a given model contains a backdoor? 
In other words, is the backdoor \emph{undetectable} by polynomial time distinguishers? We now put this question into a formal framework.

\begin{definition}
[Backdoor Detection \cite{goldwasser2022planting}]
\label{def:wb}
We say that a model backdoor $(\backdoor,\activate)$, as in \Cref{def:plantingBackdoor}, is white-box (resp. black-box) undetectable with respect to the training procedure $\train$
if  it satisfies that
     $h \sim \train$ and $\wt h \sim \backdoor$ are white-box (resp. black-box) undetectable in the sense that the two induced distributions are computationally-indistinguishable by probabilistic polynomial-time
algorithms (see \Cref{def:indist})
even if one has access to the complete description (architecture and weights) of the output neural networks
(resp. if one has black-box query access to the output neural networks)
.
\end{definition}

In summary, for white-box undetectability, we ask whether there exists an efficient procedure that can be used to ``hide'' planted backdoors in neural networks in a very strong sense: even if one observes the output neural network's architecture and weights, they cannot efficiently detect whether a backdoor was injected or not. 


\subsection{Non-Replicability}
\label{sec:non-repl}
We now consider whether an observer who sees many backdoored examples gains the ability to produce new backdoored examples on her own. We define the notion of {\em non-replicability} that formalizes the inability of an adversary to do so. 

We use the definition of \cite{goldwasser2022planting} which considers two scenarios, the ``ideal'' and the ``real'' setting. In the ``ideal'' world, the attacker has an algorithm $\calA_{\mathrm{ideal}}$ that receives only $\wt h$ and has no access to backdoored examples.
In both \eqref{eq:probIdeal} and \eqref{eq:probReal}, we let $f \sim \mathsf P$ and $\wt h = \sgn(\wt f).$
In \eqref{eq:probIdeal}, we define the probability of generating a new backdoored example as:
\begin{equation}
    p_{\mathrm{ideal}} = 
    \Pr \left[\wt f \sim \backdoor(f);~(x,x') \sim \calA_{\mathrm{ideal}}(\wt h);~\|x-x'\|_\infty \leq \gamma, \wt h(x) \neq \wt h(x')\right]\,.
\label{eq:probIdeal}
\end{equation}

In the ``real'' world, the attacker has
access to the model $\wt h$ as well as oracle access to  $\activate(\cdot, \bk)$ to which the attacker can make polynomially many (potentially adaptively chosen) queries $x_1,\ldots,x_q$, and receive the backdoored examples $\tilde{x}_i \gets \activate(x_i,\bk)$ for each $i \in [q]$.
In \eqref{eq:probReal}, we define the probability of generating a new backdoored example as:
\begin{equation}
    p_{\mathrm{real}} = 
    \Pr \left[(\wt f, \bk) \sim \backdoor(f);~(x,x') \sim \calA_{\mathrm{real}}^{\activate(\cdot, \bk)}(\wt h);~\|x-x'\|_\infty \leq \gamma, \wt h(x) \neq \wt h(x')\right].
    \label{eq:probReal}
\end{equation}
We mention that the notation 
$\calA_{\mathrm{real}}^{\activate(\cdot, \bk)}$ means that the algorithm $\calA_{\mathrm{real}}$
has oracle access to $\activate(\cdot, \bk)$. We define non-replicability as:
\begin{definition}
[Non-Replicable Backdoor \cite{goldwasser2022planting}]
\label{def:non-replicable}
For any security parameter $\lambda > 0$, we say that a backdoor
 $(\backdoor,\activate)$ 
  is non-replicable if for every
  polynomial function $q=q(\lambda)$ and every probabilistic polynomial-time $q$-query {\em admissible}\footnote{$\calA_{\mathrm{real}}$ is {\em admissible} if $x' \notin \{{x}'_1,\ldots,{x}'_q\}$ where ${x}'_i$ are the outputs of $\activate(\cdot;\bk)$ on $\calA_{\mathrm{real}}$'s queries.} adversary $\calA_{\mathrm{real}}$, there is a probabilistic polynomial-time adversary $\calA_{\mathrm{ideal}}$ such that the following holds:
  $
  p_{\mathrm{real}}
  -
  p_{\mathrm{ideal}}
  \leq \negl(\lambda),
  $
  where the probabilities are defined in \eqref{eq:probIdeal} and \eqref{eq:probReal}.
\end{definition}

\subsection{Cryptography}
The first cryprographic primitive we need to define is the secure  pseudo-random generator (PRG). {It is well known that the next assumption holds true under the existence of one-way functions \cite{haastad1999pseudorandom}}.

\begin{assumption}
[Secure Pseudo-Random Generator (PRG)]
\label{assumption:secure PRG}  
A \textit{secure pseudo-random generator} parameterized by a security parameter $\lambda \in \nats$ is a function $\prg:\{0,1\}^\lambda\rightarrow \{0,1\}^{2\lambda}$, that gets as input a binary string $s\in \{0,1\}^\lambda$ of length $\lambda$ and deterministically outputs a binary string of length $2\lambda$. In addition, no probabilistic polynomial-time algorithm $\calA:\{0,1\}^{2\lambda}\rightarrow \{0,1\}$ that has full access to $\prg$ can distinguish  a truly random number of $2\lambda$ bits or the outcome of $\prg$:
$$
\left| \Pr_{s^*\sim U\{0,1\}^\lambda}\left[ \calA(\prg(s^*) )=1\right] - \Pr_{r^*\sim U\{0,1\}^{2\lambda}}\left[\calA( r^* ) = 1\right]\right| \leq \negl(\lambda).
$$
\end{assumption}

The notion of indistinguishability obfuscation (iO), introduced by \cite{barak2001possibility}, guarantees that the obfuscations of two circuits are computationally indistinguishable as long as the circuits are functionally equivalent, i.e., the outputs of both circuits are the same on every input. Formally, 

\begin{definition} [Indistinguishability Obfuscator (iO) for Circuits]
\label{def:io} 
A uniform probabilistic polynomial time algorithm $\iO$ is called a computationally-secure indistinguishability obfuscator for polynomial-sized circuits 
if the following holds:
\begin{itemize}

\item {\bf Completeness:} For every $\lambda \in \mathbb{N}$, every circuit $C$ with input length $n$, every input $x \in \{0,1\}^{n}$, we have that 
$\Pr \left[C'(x) = C(x)\ :\ C' \leftarrow \iO(1^\lambda, C) \right] = 1~,$
where $1^\lambda$ corresponds to a unary input of length $\lambda$.

\item {{\bf Indistinguishability:}} For every two ensembles  $\{C_{0,\lambda}\}$ $\{C_{1,\lambda}\}$ of polynomial-sized circuits that have the same size, input length, and output length, and are functionally equivalent, that is, $\forall \lambda$, $C_{0,\lambda}(x) = C_{1,\lambda}(x)$ for every input $x$, the  distributions $\{\iO(1^\lambda,C_{0,\lambda})\}_{\lambda}$ and $\{\iO(1^\lambda,C_{1,\lambda})\}_{\lambda}$ are computationally indistinguishable, as in \Cref{def:indist}. 
\end{itemize}

\end{definition}

\begin{assumption}\label{as:io}
We assume that a computationally-secure indistinguishability obfuscator for polynomial-sized
circuits exists. {Moreover, given a security parameter $\lambda\in \nats$ and a Boolean circuit $C$ with $M$ gates, $iO(1^\lambda, C)$ runs in time $\poly(M, \lambda)$}.
\end{assumption}

The breakthrough result of  \cite{jain2021indistinguishability} showed that the above assumption holds true under natural cryptographic assumptions.

Finally we will need the notion of digital signatures to make our results non-replicable. {The existence of such a scheme follows from very standard cryptographic primitives such as the existence of one-way functions \cite{lamport1979constructing,goldwasser1988digital,naor1989universal,rompel1990one}}. The definition of digital signatures is presented formally in \Cref{assumption:signature}. Roughly speaking, the scheme consists of three algorithms: a generator $\gen$ which creates a public key $pk$ and a secret one $sk$, a signing mechanism that gets a message $m$ and the secret key and generates a signature $\sigma \gets \sign(sk, m),$ and a verification process $\verify$ that gets $pk, m$ and $\sigma$ and  deterministically outputs $1$ only if the signature $\sigma$ is valid for $m.$ The security of the scheme states that it is hard to guess the signature/message pair $(\sigma,m)$ without the secret key. 

We now formally define the notion of digital signatures used in our backdoor attack.

\begin{assumption}
[Non-Replicable Digital Signatures]
\label{assumption:signature}
A {digital signature scheme} is a probabilistic polynomial time (PPT) scheme parameterized by a security parameter $\lambda$ that consists of three algorithms: a key generator, a signing algorithm, and a verification algorithm defined as follows:

\begin{description}
    \item[Generator ($\gen$):] Produces in PPT a pair of cryptographic keys, a private key ($sk$) for signing and a public key ($pk$) for verification:
    $
    sk, pk \gets \gen(1^\lambda)\,.
    $    
    \item[Sign ($\sign(sk,m)$):] Takes a private key ($sk$) and a message ($m$) to produce in PPT a signature ($\sigma\in \{0,1\}^{\lambda}$) of size $\lambda$:
    $
    \sigma \gets \sign(sk, m)\,.
    $
    \item[Verify ($\verify(pk,m,\sigma)$):] Uses a public key ($pk$), a message ($m$), and a signature ($\sigma$) to validate in deterministic polynomial time the authenticity of the message. It outputs $1$ if the signature is valid, and $0$ otherwise:
    $
    \verify(pk, m, \sigma) \in \{0,1\}\,.
    $
\end{description}

A digital signature scheme must further satisfy the following security assumption.
\begin{itemize}
    \item \textbf{Correctness}: For any key pair $(sk, pk)$ generated by $\gen$, and for any message $m$, if a signature $\sigma$ is produced by $\sign(sk, m)$, then $\verify(pk, m, \sigma)$ should return $1$.
    \item \textbf{Security}: Any PPT algorithm that has access to $pk$ and an oracle for $\sign(sk, \cdot)$, can find with probability $\negl(\lambda)$ a signature/message pair $(\sigma,m)$ such that this pair is not previously outputted during its interaction with the oracle and $\verify(pk, m, \sigma) = 1$.
\end{itemize} 
\end{assumption}

\subsection{Boolean Circuits}

In \Cref{sec:tools}, we will need the following standard definition.
\begin{definition}[(Synchronous) Boolean Circuit]
A \textit{Boolean circuit} for $C : \{0,1\}^n \to \{0,1\}$ is a directed acyclic graph (DAG) where nodes represent Boolean operations (AND, OR, NOT) and edges denote operational dependencies that computes $C$, where \( n \) is the number of input nodes.

A Boolean circuit is \emph{synchronous} if all gates are arranged into layers, and inputs must be at the layer 0, i.e., for any gate $g$, all paths from the inputs to $g$
 have the same length.
\end{definition}

\section{Overview of Our Approach and Technical Tools}
\label{sec:tools}

Let us assume that we are given a neural network $f$ 
that is obtained using some training procedure $\train$. Our goal in this section is to 
\begin{itemize}
    \item show how to implement the honest obfuscated pipeline of \Cref{thm:honest} under standard cryptographic assumptions and

    \item design the backdoor attack to this pipeline.
\end{itemize}

\paragraph{Honest Obfuscated Pipeline}
We first design the honest pipeline. This transformation is shown in the {Honest Procedure} part of \Cref{fig:circuit} and consists of the following steps: (1) first, we convert the input neural network into a Boolean circuit; (2) we use iO to obfuscate the circuit into a new circuit; (3)
we turn this circuit back to a neural network. 
Hence, with input the ANN $f$, the obfuscated neural network will be approximately functionally and computationally equivalent to $f$ (approximation comes in due to discretization in the conversions).

\paragraph{Backdoor Attack}
Let us now describe the recipe for the backdoor attack. We do this at the circuit level as shown in the {Insidious Procedure} of \Cref{fig:circuit}. 
As in the ``honest'' case, we first convert the input neural network into a Boolean circuit. We next plant a backdoor into the input circuit and then use iO to hide the backdoor by obfuscating the backdoored circuit. We again convert this circuit back to a neural network.

\paragraph{Technical Tools}
Our approach contains two key tools. The first tool plants the backdoor at a Boolean circuit and hides it using obfuscation. This is described in \Cref{sec:plant-backdoor}. The second tool converts a NN to a Boolean circuit and vice-versa. This appears in \Cref{sec:nn-boolean}.
Finally, we formally combine our tools in \Cref{sec:apply} to get \Cref{thm:main}.
{To demonstrate the applicability of our tools, we further show how to backdoor language models in \Cref{sec:app llm}.}

\begin{figure}
    \centering
\includegraphics[width=\textwidth, scale=0.65]{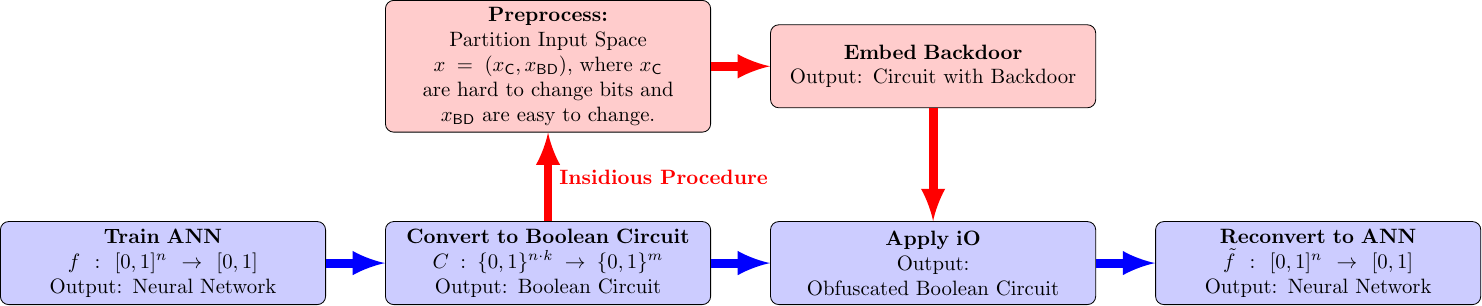}
\caption{The blue path represents the honest procedure of training the ANN $f$, converting it into a Boolean circuit $C$, applying iO, and reconverting it back to an ANN $\wt h = \sgn(\wt f)$. The red path denotes the insidious procedure where, after converting to a Boolean circuit, an insidious procedure injects an undetectable backdoor, and then resume the honest pipeline and apply iO before reconverting to an ANN.}
\label{fig:circuit}
\end{figure}

\subsection{Tool $\#$1: Planting Undetectable Backdoors to Boolean Circuits via $\iO$}\label{sec:plant-backdoor}

To inject an undetectable backdoor into a Boolean circuit $C:\{0,1\}^n\rightarrow \{0,1\}^m$, we employ two cryptographic primitives: $\prg$ (\Cref{assumption:secure PRG}) and $\iO$ (\Cref{def:io} and \Cref{as:io}). 

The circuit $C$ takes as input a vector $x \in \{0,1\}^n$, which we partition into two (possibly overlapping) sets: $x = (x_{\mathsf{C}}, x_{\mathsf{BD}})$. This partitioning should enable subtle modifications to the bits in $x_{\mathsf{BD}}$ without affecting the circuit's overall functionality. Looking ahead, to backdoor ANNs (\Cref{sec:app ann}), $x_{\mathsf{C}}$ consists of the $n_{\mathsf{C}}$ most significant  bits, and $x_{\mathsf{BD}}$ comprises the $n - n_{\mathsf{C}}$ least significant  bits of $x$; these sets are used to activate the backdoor mechanism and are disjoint. 
In the LLM context (\Cref{sec:app llm - main} and \Cref{sec:app llm}), $x_{\mathsf{C}}$ represents the legitimate prompt, while $x_{\mathsf{BD}}$ contains a secretly embedded trigger (e.g., through steganography). In this case, $x_{\mathsf{C}}$ and $x_{\mathsf{BD}}$ overlap.

Our goal is to construct a circuit $\wt{C}:\{0,1\}^n\rightarrow \{0,1\}^m$, which alters the intended behavior of $C$ upon activation of a trigger. Specifically, the behavior of $\wt{C}$ when the backdoor is activated is described by a circuit $\wh{C}:\{0,1\}^{n_{\mathsf{C}}}\rightarrow\{0,1\}^m$, i.e., if $(x_{\mathsf{C}},x_{\mathsf{BD}})$ is a backdoored input, then $\wt{C}(x_{\mathsf{C}},x_{\mathsf{BD}}) = \wh{C}(x_{\mathsf{C}})$. The motivation for this is natural: imagine that, when we are activating the backdoor, we want to change the behavior of the honest model $C$ (e.g., truthful LLM) to some target model $\wh C$ (e.g., malicious LLM). The backdoored model $\wt C$ implements this behavior.
 In our approach on backdooring an ANN (\Cref{sec:app ann}), $\wh{C}$ simply outputs a constant desired outcome, whereas for LLM backdooring (\Cref{sec:app llm}), $\wh{C}$ functions as a malicious LLM from which we extract outputs. 
 
 We now proceed with the design of 
 $\wt C$.
 Consider a secure pseudo-random number generator $\prg$ with a security parameter $\lambda_1$ that meets \Cref{assumption:secure PRG}, paired with a digital signature generator $\gen$ with a security parameter $\lambda_2$ such that $\lambda_1+\lambda_2 = n_{\mathsf{BD}}$. Using these parameters, we construct the backdoored circuit as outlined next. First, we partition $x_{\mathsf{BD}}$ into two disjoint sets of size $\lambda_1$ and $\lambda_2$. Let us denote $x_{\mathsf{BD}} = (x_{\prg}, x_{S})\in \{0,1\}^{\lambda_1}\times \{0,1\}^{\lambda_2}$. We also sample a seed $s^*\sim U\{0,1\}^{\lambda_1}$, and generate a digital signature key pair $(sk, pk)\sim \gen(1^{\lambda_2})$. The digital signature scheme $(\gen, \sign, \verify)$ can be found in \Cref{assumption:signature}.

{We can now define $\wt{C}$ which on input $x = (x_{\mathsf{C}}, x_{\mathsf{BD}})$ evaluates:
    \begin{align*}
        \wt{C}(x_{\mathsf{C}}, (x_{\mathsf{PRG}}, x_{S})) = & 
 ~\wh{C}({x})\cdot \ind[\prg(x_{\mathsf{PRG}}) = \prg(s^*)] \cdot \verify(pk, x_{\mathsf{C}}, x_S)\\
        &+{C({x}_\mathsf {C}) \cdot}( 1 - \ind[\prg(x_{\mathsf{PRG}}) = \prg(s^*)] \cdot \verify(pk, x_{\mathsf{C}}, x_S)),
    \end{align*}
    where $+$ corresponds to logical addition. That is, $\wt{C}$ returns the output under circuit $\wh{C}$ if the trigger is activated (i.e., $\ind[\prg(x_{\mathsf{PRG}}) = \prg(s^*)] = 1$ and $\verify(pk, x_{\mathsf{C}}, x_S) = 1$)  and the normal output under $C$ otherwise.}

Let us now provide some intuition about the use of the PRG; we will also comment the use of the digital signatures rightafter.
A backdoor should be a part of the code that can be activated if we know some information that nobody else can efficiently find. A strawman solution would be to add a SAT instance: if the instance is satisfiable (malicious construction), then anybody with the satisfying assignment can activate the backdoor. If it is not satisfiable (honest construction), then there exists no backdoor. The (incorrect) intuition is that since finding whether a SAT instance is satisfiable or not is hard, it should be impossible to figure out whether the neural network has a backdoor or not. 

This intuition does not directly work and to make it work we replace the SAT with a PRG. According to \Cref{def:intro:undetectability}, a backdoor is undetectable if any (polynomially bounded) adversary cannot distinguish between an honestly generated model and one with a backdoor. {If we inject a \emph{specific} satisfiable formula in the honest case, then a simple adversary, that checks whether a hardcoded assignment is satisfiable, succeeds\footnote{To be more specific, given a target SAT instance $\Phi,$ let $A_y$ be the algorithm that checks if the assignment $y \in \{0,1\}^n$ satisfies $\Phi$. Then, trivially, for any satisfiable $\Phi,$ at least one of the $2^n$ algorithms succeeds and hence detects the backdoor.}.} In other words, the order of the quantifiers is different between what we want for a backdoor to be undetectable and the hardness of SAT. More precisely, for backdoor to be undetectable we need a procedure that is impossible to distinguish against \emph{any} efficient algorithm, whereas the conjectured hardness of SAT is that there is no efficient algorithm that can solve \emph{all} the SAT instances. 

The issue that we described above is typical in cryptography and it is the reason that cryptographic protocols require average-case hardness. Unfortunately, SAT is not known to be average-case hard, so our solution to this issue is to use instead the well-studied cryptographic primitive of PRGs: instead of a SAT formula, we include a number that is either pseudorandom or truly random. Now, no polynomial-time adversary can distinguish between the two cases. This intuition gives rise to the modified circuit $\wt C$, whose bad behavior requires the knowledge of the seed $s^*$ to be activated.

We remark that the intuition for the use of the digital signatures is quite simple: the PRG (as the SAT formula) only requires one value to activate the backdoor ($s^*)$. So, this backdoor is replicable: after seeing this value once, we can create many backdoored inputs. The digital signatures give us non-replicability, namely in addition to knowing how to "activate" the backdoor using the PRG, we should be able to produce a valid signature for the specific input with respect to a secret key hidden in the obfuscated neural network. Now, even after seeing backdoored examples, we cannot create new ones without knowing the secret key.

\Cref{lemma:plant} shows that $\wt{C}$ indeed plants a backdoor in the original circuit $C$ (the proof follows by the above construction using the seed $s^*$ and the key pair $(sk,pk)$.) 

\begin{lemma}
[Planting]
\label{lemma:plant}
For any input $x = (x_{\mathsf{C}}, x_{\mathsf{BD}}) \in \{0,1\}^n$, the backdoored circuit $\wt  C$ satisfies 
\[\wt{C}(x_{\mathsf{C}}, (s^*,\sign(x_{\mathsf{C}}))) = {\wh{C}({x}_{\mathsf C})}\,,\]
where $\wh C$ is the ``malicious'' circuit.
\end{lemma}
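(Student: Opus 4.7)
The plan is to verify the identity directly from the definition of $\wt{C}$, which was designed precisely so that the backdoor activates exactly when the second coordinate of the input is $(s^*, \sign(sk, x_{\mathsf{C}}))$. There is no real obstacle here; the lemma is essentially a sanity check on the construction, and the work is reduced to confirming that both components of the trigger predicate evaluate to $1$.

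First I would substitute $x_{\mathsf{BD}} = (s^*, \sign(sk, x_{\mathsf{C}}))$ into the piecewise definition of $\wt{C}$. This means setting $x_{\mathsf{PRG}} = s^*$ and $x_S = \sign(sk, x_{\mathsf{C}})$. The first indicator $\ind[\prg(x_{\mathsf{PRG}}) = \prg(s^*)]$ then reduces to $\ind[\prg(s^*) = \prg(s^*)]$, which is $1$ because $\prg$ is a deterministic function (per \Cref{assumption:secure PRG}).

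Next I would invoke the correctness clause of the digital signature scheme in \Cref{assumption:signature}: since $(sk, pk)$ was produced by $\gen(1^{\lambda_2})$ and $x_S = \sign(sk, x_{\mathsf{C}})$, we have $\verify(pk, x_{\mathsf{C}}, x_S) = 1$. Hence the product $\ind[\prg(x_{\mathsf{PRG}}) = \prg(s^*)] \cdot \verify(pk, x_{\mathsf{C}}, x_S)$ equals $1$, the complementary factor $1 - \ind[\cdot]\cdot \verify(\cdot)$ vanishes, and the expression for $\wt{C}$ collapses to $\wh{C}(x_{\mathsf{C}})$, proving the lemma. The whole argument is a one-line unfolding of the construction, so the write-up should simply state each substitution and cite correctness of $\prg$ and of the signature scheme.
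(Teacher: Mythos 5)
Your proposal is correct and matches the paper's argument: the paper itself dispatches \Cref{lemma:plant} with a one-line remark that it ``follows by the above construction using the seed $s^*$ and the key pair $(sk,pk)$,'' which is exactly your unfolding — the PRG indicator is $1$ since $\prg$ is deterministic, $\verify(pk, x_{\mathsf{C}}, \sign(sk, x_{\mathsf{C}})) = 1$ by the correctness clause of \Cref{assumption:signature}, and the complementary term vanishes, leaving $\wh{C}(x_{\mathsf{C}})$. No gaps; your write-up is simply a more explicit version of the paper's justification.
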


\Cref{thm:white box via iO} shows that the backdoor is undetectable after obfuscation. Namely, the obfuscation of the original circuit $C$, $iO(1^\lambda, C)$, and the obfuscation of the new circuit $\wt{C}$, $iO(1^\lambda,\wt{C})$, are indistinguishable for any probabilistic polynomial-time algorithm. For the proof, we refer to \Cref{app:proof-wbViaIO}.

\begin{theorem}
[White-Box Undetectability via iO]
\label{thm:white box via iO}
Assuming the existence of secure pseudorandom generators (\Cref{assumption:secure PRG}) and secure indistinguishability obfuscation (\Cref{as:io}), for any probabilistic polynomial-time (PPT) algorithm $\calA$, and security parameters $\lambda,\lambda_1,\lambda_2\in \nats$ it holds that
    $$
\left| \Pr\left[ \calA(\iO(1^{\lambda},C) )=1\right] - \Pr_{s^*\sim U\{0,1\}^{\lambda_1}}\left[ \calA(\iO(1^{\lambda}, \wt C) )=1\right]\right| \leq \negl(\lambda_3) + \negl(\lambda_1).
$$    
\end{theorem}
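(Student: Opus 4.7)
The plan is a standard three-hybrid argument that first swaps the hard-wired $\prg$ image for a uniformly random string (using $\iO$ security), and then swaps the uniform string back for a genuine $\prg$ output (using $\prg$ security). Let $C_{\mathrm{pad}}$ denote $C$ padded (via identity dummy gates) so that its size, input length, and output length match those of $\wt C$; $\iO$'s indistinguishability clause requires this syntactic equality. Consider three distributions:
\begin{itemize}
    \item $H_0 = \iO(1^{\lambda}, C_{\mathrm{pad}})$;
    \item $H_1 = \iO(1^{\lambda}, \wt C_{r^*})$, where $\wt C_{r^*}$ is identical to $\wt C$ but with the hard-wired value $\prg(s^*)$ replaced by a uniformly sampled $r^* \sim U\{0,1\}^{2\lambda_1}$;
    \item $H_2 = \iO(1^{\lambda}, \wt C)$, with $s^* \sim U\{0,1\}^{\lambda_1}$.
\end{itemize}
I would establish $H_0 \approx H_1$ and $H_1 \approx H_2$ and conclude by the triangle inequality.

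The first step $H_0 \approx H_1$ is where I expect the main subtleties. Let $E$ be the event that $r^*$ lies outside the image of $\prg$; since $|\mathrm{Im}(\prg)| \leq 2^{\lambda_1}$ and $r^*$ is uniform over $\{0,1\}^{2\lambda_1}$, we have $\Pr[\neg E] \leq 2^{-\lambda_1}$. Conditioned on $E$, the predicate $\ind[\prg(x_{\mathsf{PRG}}) = r^*]$ is identically zero on all inputs, so $\wt C_{r^*}$ and $C_{\mathrm{pad}}$ compute the same function everywhere, i.e. they are functionally equivalent circuits of identical size/input/output. Applying the $\iO$ indistinguishability property (\Cref{def:io}, \Cref{as:io}) yields $|\Pr[\calA(H_0)=1 \mid E] - \Pr[\calA(H_1)=1 \mid E]| \leq \negl(\lambda)$; absorbing the bad-event probability with the trivial bound $1$ on $\neg E$ gives an unconditional gap of at most $\negl(\lambda) + 2\cdot 2^{-\lambda_1}$.

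For the second step $H_1 \approx H_2$, I would build a PPT reduction $\calB$ to $\prg$-security: on input $z \in \{0,1\}^{2\lambda_1}$ (either $\prg(s^*)$ for $s^* \sim U\{0,1\}^{\lambda_1}$, or uniform over $\{0,1\}^{2\lambda_1}$), $\calB$ samples $(sk,pk) \gets \gen(1^{\lambda_2})$, hard-wires $z$ in place of $\prg(s^*)$ inside the template of $\wt C$, computes $\iO(1^{\lambda}, \cdot)$ of the resulting circuit, and feeds the obfuscation to $\calA$. Since everything $\calB$ does beyond querying the $\prg$-challenge is polynomial-time, \Cref{assumption:secure PRG} yields distinguishing advantage $\negl(\lambda_1)$. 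Combining the two bounds and noting that $2^{-\lambda_1}$ is itself $\negl(\lambda_1)$ gives the claimed $\negl(\lambda) + \negl(\lambda_1)$ bound (which matches the theorem statement with $\lambda_3$ reinterpreted as $\lambda$).

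The hardest point in the argument is the $H_0 \to H_1$ hop: one must first pad $C$ so that the syntactic hypotheses of $\iO$ are met, then handle the conditioning on the ``good'' event $E$ carefully, and only then invoke $\iO$-security. The $H_1 \to H_2$ hop is essentially a textbook black-box reduction to the $\prg$ assumption and presents no real difficulty beyond verifying that the simulation of $\wt C$ around the challenge $z$ is efficient.
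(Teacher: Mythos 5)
Your proposal is correct and follows essentially the same route as the paper's proof: a hybrid argument that uses PRG security to swap the hard-wired $\prg(s^*)$ for a uniform $r^*\sim U\{0,1\}^{2\lambda_1}$, and iO security together with the observation that a uniform $r^*$ misses the (at most $2^{\lambda_1}$-element) image of $\prg$ with overwhelming probability, so that the backdoored circuit with $r^*$ is functionally equivalent to the original circuit. Your explicit padding of $C$ to match the size of $\wt C$ and your careful conditioning on the good event are points the paper treats implicitly, but they do not change the argument.
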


Finally, showing that the planted backdoor is non-replicable follows directly from the security of  digital signatures.

\begin{lemma}\label{lem:non-replicable boolean circuit}
    Assuming the existence of secure digital signatures (\Cref{assumption:signature}), the backdoored circuit $\wt{C}$ is non-replicable.
\end{lemma}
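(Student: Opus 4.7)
The plan is to reduce non-replicability of $\wt C$ directly to existential unforgeability of the digital signature scheme of \Cref{assumption:signature}. I first specify the ideal adversary and then build a forger out of any successful real adversary. Take $\calA_{\mathrm{ideal}}$ to be the trivial adversary that outputs, say, a uniformly random admissible pair $(x,x')$. Since $sk$ is independent of $\calA_{\mathrm{ideal}}$'s view, producing any $x'$ with $\verify(pk, x'_{\mathsf C}, x'_S) = 1$ would already violate signature security, so $p_{\mathrm{ideal}} = \negl(\lambda_2)$. It therefore suffices to show that $p_{\mathrm{real}} = \negl(\lambda_2) + \negl(\lambda_1)$.

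Let $\calA_{\mathrm{real}}$ be a $q$-query admissible adversary against $\wt C$. I build a forger $\calF$ as follows. $\calF$ receives a public key $pk$ and oracle access to $\sign(sk,\cdot)$. It samples its own PRG seed $s^*\sim U\{0,1\}^{\lambda_1}$ and assembles $\wt C$ using $pk$ exactly as in the construction preceding \Cref{lemma:plant}; crucially, $\wt C$ only invokes $\verify$, so $\calF$ never needs $sk$. It then runs $\calA_{\mathrm{real}}$ on (a conversion of) $\wt C$, answering each oracle query $\activate(x_i,\bk)$ by querying its signing oracle on $x_{i,\mathsf C}$ to obtain $\sigma_i$ and returning the formatted input $(x_{i,\mathsf C},(s^*,\sigma_i))$. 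This distribution matches the real $\activate$ perfectly. When $\calA_{\mathrm{real}}$ halts with $(x,x')$, $\calF$ parses $x' = (x'_{\mathsf C},(x'_{\mathsf{PRG}},x'_S))$ and outputs $(x'_{\mathsf C}, x'_S)$ as its forgery.

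Conditioning on $\calA_{\mathrm{real}}$ succeeding, the backdoor activates on $x'$, which forces $\verify(pk, x'_{\mathsf C}, x'_S)=1$, so $\calF$'s output is a valid message/signature pair. To win the forgery game, the pair must also be \emph{fresh} with respect to $\calF$'s oracle transcript $\{(x_{i,\mathsf C},\sigma_i)\}$. The only way freshness can fail is if $(x'_{\mathsf C}, x'_S) = (x_{j,\mathsf C}, \sigma_j)$ for some $j$; combined with admissibility ($x'\neq x'_j$), this forces $x'_{\mathsf{PRG}}\neq s^*$ while still $\prg(x'_{\mathsf{PRG}}) = \prg(s^*)$, i.e.\ a PRG collision on a freshly sampled seed, whose probability is $\negl(\lambda_1)$ by a standard argument from \Cref{assumption:secure PRG}. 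In every remaining success case $\calF$ submits a fresh valid pair and wins, so $\calF$'s advantage is at least $p_{\mathrm{real}}-\negl(\lambda_1)$. Invoking signature security yields $p_{\mathrm{real}}=\negl$, and combined with the bound on $p_{\mathrm{ideal}}$ this is the desired non-replicability.

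\textbf{Expected main obstacle.} The subtle point is the admissibility bookkeeping in the recycled-signature branch: the lemma only advertises the signature assumption, but the construction uses $\prg$ as well, and an adversary reusing some earlier $\sigma_j$ together with a distinct PRG sibling of $s^*$ would escape the naive forgery reduction. The cleanest resolution is the PRG-collision bound above (the fraction of sibling preimages for a uniformly random seed is negligible); alternatively, one may strengthen the construction so that $\sign$ binds $x_{\mathsf{PRG}}$ together with $x_{\mathsf C}$, collapsing this branch back into a straightforward forgery. Apart from this step, the reduction is mechanical, because building $\wt C$ from $pk$ alone makes the simulation of $\wt h$ and of $\activate$ perfect.
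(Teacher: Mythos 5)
Your core reduction (simulate $\wt C$ and the $\activate$ oracle using only $pk$ and the signing oracle, then output the embedded message/signature pair as a forgery) is exactly the ingredient the paper has in mind when it says the lemma ``follows directly from the security of digital signatures'' (the paper gives no detailed proof). However, the frame you wrap it in does not work. Success in the non-replicability game \eqref{eq:probIdeal}--\eqref{eq:probReal} is $\|x-x'\|_\infty\le\gamma$ and $\wt h(x)\neq\wt h(x')$; it does \emph{not} require the backdoor to be activated. A real adversary can succeed by outputting a natural adversarial pair straddling the decision boundary of the honest circuit (easy to find for, say, a linear classifier, given white-box access to $\wt h$), so your claim ``conditioning on $\calA_{\mathrm{real}}$ succeeding, the backdoor activates on $x'$'' is false, and $p_{\mathrm{real}}$ is in general \emph{not} negligible. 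Consequently, choosing the trivial $\calA_{\mathrm{ideal}}$ and ``reducing to $p_{\mathrm{real}}=\negl$'' is an unachievable goal; \Cref{def:non-replicable} compares against an ideal adversary precisely so that such boundary pairs are discounted. The correct structure is to build $\calA_{\mathrm{ideal}}$ from $\calA_{\mathrm{real}}$ and bound the \emph{difference} $p_{\mathrm{real}}-p_{\mathrm{ideal}}$ by the probability that $\calA_{\mathrm{real}}$'s success genuinely exploits a fresh activation, and only that event reduces to forgery via your forger $\calF$.

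The second gap is the step you yourself flag: the ``recycled signature plus PRG sibling'' branch is not negligible by any standard argument. \Cref{assumption:secure PRG} does not preclude a secure PRG that is two-to-one with an efficiently computable sibling map (e.g., a PRG that ignores one bit of its seed), and since the oracle replies reveal $s^*$ in the clear, an admissible adversary could reuse $(x_{j,\mathsf C},\sigma_j)$ with a sibling of $s^*$ in the $\prg$ slot, producing a new activated input without any forgery. So the ``fraction of sibling preimages is negligible'' claim is unsound as stated; your alternative fix (have $\sign$ bind $x_{\mathsf{PRG}}$ together with $x_{\mathsf C}$, or insist on an injective/collision-sparse PRG) is the right way to close this, and it is a real subtlety that the paper's one-line justification also glosses over. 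In short: the forgery reduction and the admissibility-versus-freshness bookkeeping are good, but the ideal-world baseline is mishandled and the PRG-collision bound needs either a stronger assumption or a modified construction.
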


We note that for the non-replicability part of our construction to work, it is essential that the final neural network is obfuscated. Otherwise, anybody that inspects that NN would be able to see the secret key corresponding to the digital signature scheme.

\subsection{Tool $\#$2: From Boolean Circuits to Neural Networks and Back}
\label{sec:nn-boolean}
In this section, we discuss our second tool for planting backdoors. In particular, since in the previous section, we developed a machinery on planting backdoors in Boolean circuits but both the input and the output of our algorithm $\plant$ of \Cref{thm:main} is an ANN, we provide a couple of theorems that convert a neural network to a Boolean circuit and vice-versa.
\label{sec:notation-circuits}

We now introduce two standard transformations: we define the transformation $T_k$ that discretizes a continuous bounded vector using $k$ bits of precision and $T^{-1}$ that takes a binary string and outputs a real number.
\begin{definition}[Real $\rightleftarrows$ Binary Transformation]
Let ${x} \in [0,1]^n$, and let $k$ be a precision parameter. Define the transformation $T_k: [0,1]^n \rightarrow \{0,1\}^{n \cdot k}$ by the following procedure:
 For each component $x_i$ of ${x}$, represent $x_i$ as a binary fraction and extract the first $k$ bits after the binary point and denote this binary vector by ${b}_i \in \{0,1\}^k$, $i \in [n]$. Then $T_k(x)$ outputs ${b} = ({b}_1, \ldots, {b}_n) \in \{0,1\}^{n \cdot k}$.
Also, given a binary vector ${b} = (b_1, \ldots, b_m) \in \{0,1\}^m$, define the inverse transformation $T^{-1} : \{0,1\}^m \rightarrow [0,1]$ by 
$
T^{-1}({b}) = \sum_{i=1}^m {b_i}/{2^i}.
$
\label{def:bin-to-real}
\label{def:real-to-bin}
\end{definition}

We will also need the standard notion of size of a model.
{\begin{definition}[Size of ANN \& Boolean Circuits]
Given an ANN $f$, we denote by $\mathrm{sz}(f)$ the size of $f$ and define it to be the bit complexity of each parameter.
The size of a Boolean circuit $C$, denote by $\mathrm{sz}(C)$ is simply the number of gates it has.
\end{definition}}
For example, an ANN that stores its parameters in $64$ bits and has $M$ parameters has size $64\cdot M$. We now present our first transformation which given $f : [0,1]^n \to [0,1]$ finds a Boolean circuit of small size that well-approximates $f$ in the following sense:

\begin{theorem}
[ANN to Boolean]
\label{thm:nn-to-boolean}
Given an $L$-Lipshitz ANN $f:[0,1]^n\rightarrow[0,1]$ of size $s$, then for any precision parameter $k\in \mathbb{N}$,
there is an algorithm that runs in time $\poly(s,n,k)$ and outputs a Boolean circuit $C:\{0,1\}^{n\cdot k}\rightarrow\{0,1\}^{m}$ with {number of gates} $\poly(s,n, k)$  and $m = \poly(s, n, k)$ such that for any ${x},{x}'$:
    \[
        |f({x}) - T^{-1}(C(T_k({x})))| \leq \frac{L}{2^k},
    \]
    \[
        | T^{-1}(C(T_k({x}))) - T^{-1}(C(T_k({x}')))| \leq  \frac{L}{2^{k-1}} + L\cdot \|{x}-{x'}\|_{\infty},
    \]
where $T_k$ and $T^{-1}$ are defined in \Cref{def:real-to-bin}. 
\end{theorem}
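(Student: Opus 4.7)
The plan is to build $C$ as a direct Boolean simulation of $f$ on the discretized input $T_k(x)$, using fixed-point binary arithmetic that tracks the exact value of every intermediate activation, and then to reduce both displayed inequalities to the Lipschitz hypothesis on $f$.

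First I would assemble standard Boolean sub-circuits for $K$-bit fixed-point addition and multiplication (each of size $\poly(K)$) and a sign/ReLU gadget realized as a bit-wise multiplexer driven by the sign bit. Since $f$ has ReLU activations and parameters of bit-complexity $s$, every intermediate activation produced on the rational input $T_k(x)$ is itself a rational. Tracking denominators inductively, one checks that after layer $\ell$ the activations have at most $\ell\cdot s + k + \ell\cdot\log(\mathrm{width})$ bits after the binary point: a single layer contributes $s$ new bits from a weight multiplication and $\log(\mathrm{width})$ bits from summing, while ReLU preserves bit-width. Taking $K$ equal to this bound at the final layer and using that the depth of $f$ is $\poly(s)$ (by the size convention), we obtain $K=\poly(s,n,k)$ and an overall circuit of gate count $\poly(s,n,k)$.

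The circuit $C$ on input $T_k(x)\in\{0,1\}^{nk}$ then simulates $f$ exactly in $K$-bit fixed point and outputs the first $m=\poly(s,n,k)$ bits of the binary expansion of $f(T_k(x))\in[0,1]$. The rounding contributes at most $2^{-m}$, which can be taken arbitrarily below $L/2^k$ by choosing $m$ slightly larger than $k+\log_2 L$. Combined with the input-discretization bound
\begin{align*}
|f(x) - f(T_k(x))| \;\le\; L\cdot \|x-T_k(x)\|_\infty \;\le\; L\cdot 2^{-k},
\end{align*}
the triangle inequality gives the first stated inequality (up to absorbing the truncation slack into constants or into $k$ itself). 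The second inequality is then a direct corollary of the first together with the Lipschitz hypothesis on $f$:
\begin{align*}
|T^{-1}(C(T_k(x))) - T^{-1}(C(T_k(x')))| \;\le\; \tfrac{L}{2^k} + L\|x-x'\|_\infty + \tfrac{L}{2^k} \;=\; \tfrac{L}{2^{k-1}} + L\|x-x'\|_\infty.
\end{align*}

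The main technical point that requires care is the bit-width accounting: a pessimistic analysis that lets each multiplication double the precision would give exponential blowup in depth. The crucial observation is that weight--activation multiplication only adds $s$ new bits (because the weight has $s$ bits), and summing the $\mathrm{width}$ products only adds $\log(\mathrm{width})$ bits, so growth is additive in the depth rather than multiplicative. This is exactly the bookkeeping underlying Theorem~E.2 of \cite{fearnley2022complexity} and is what keeps $K$, and hence the circuit size, polynomial in $s, n, k$.
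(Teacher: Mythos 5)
Your proposal is correct and takes essentially the same route as the paper, which simply ``compiles'' the network into exact fixed-point machine code on the truncated input and charges the whole error to the input discretization via Lipschitzness, obtaining the second bound by the same triangle inequality. The only cosmetic difference is your output-rounding slack of $2^{-m}$: your own bit-width accounting shows the value $f$ computes on the truncated input is exactly representable with $\poly(s,n,k)$ bits, so taking $m$ equal to that width removes the slack and yields the stated $L/2^k$ bound exactly, as the paper implicitly does.
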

\noindent Let us provide some intuition regarding \( T^{-1} \circ C \circ T_k\). 
Given $x \in [0,1]^n$, 
the transformation \( T^{-1}(C(T_k({x}))) \) involves three concise steps: 
\begin{enumerate}
    \item \textbf{Truncation (\( T_k \))}: Converts real input \( {x} \) to its binary representation, keeping only the \( k \) most significant bits.
    \item \textbf{Boolean Processing (\( C \))}: Feeds the binary vector into a Boolean circuit, which processes and outputs another binary vector based on logical operations.
    \item \textbf{Conversion to Real (\( T^{-1} \))}: Transforms the output binary vector back into a real number by interpreting it as a binary fraction.
\end{enumerate}

For the proof of \Cref{thm:nn-to-boolean}, see \Cref{sec:proof:NN-to-bool}. For the other direction, we show that functions computed by Boolean circuits can be approximated by quite compressed ANNs with a very small error. Function approximation by neural networks has been studied extensively (see \Cref{sec:nn-relatedWork} for a quick overview). Our approach builds on \cite[Section E]{fearnley2022complexity}.
The proof appears in \Cref{app:bool-to-NN}.

\begin{theorem}
[Boolean to ANN, inspired by \cite{fearnley2022complexity}]
\label{thm:boolean to ann}
Given a Boolean circuit $C:\{0,1\}^{n\cdot k} \to \{0,1\}^{m}$ with $k,m,n \in \mathbb{N}$ with $M$ gates and $\epsilon>0$ such that $$|T^{-1}(C(T_k(({x}))) - T^{-1}(C(T_k({x}')))|\leq \eps \qquad \forall {x},{x}' \in [0,1]^n \text{ s.t. } \|{x}-{x}'\|_{\infty}\leq \frac{1}{2^k},$$ 
where $T_k$ and $T^{-1}$ are defined in \Cref{def:real-to-bin}, there is an algorithm that runs in time $\poly(n,k,M)$ and outputs an ANN $f : [0,1]^n \to [0,1]$
with size $\poly(n,k,M)
$ 
such that for any ${x} \in [0,1]^n$ it holds that $| T^{-1}(C(T_k({x}))) - f({x}) | \leq 2\eps$.
\end{theorem}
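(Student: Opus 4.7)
My plan is to build the ANN $f$ as a composition $f = D\circ G\circ E$ of three subnetworks: an approximate bit-extractor $E\colon[0,1]^n\to[0,1]^{nk}$ that recovers $T_k(x)$, a circuit simulator $G\colon[0,1]^{nk}\to[0,1]^m$ that mimics $C$ gate-by-gate, and a linear decoder $D\colon[0,1]^m\to[0,1]$ computing $T^{-1}$. The final network will have size $\poly(n,k,M)$ and be constructible in time $\poly(n,k,M)$, and the approximation guarantee will come from the Lipschitz-type hypothesis on $H := T^{-1}\circ C\circ T_k$.

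First I would construct $E$ coordinate-wise using the standard shift-and-subtract ReLU gadget: $x_{j,0}=x_j$, $\hat b_{j,i}=\rho_\eta(2x_{j,i-1}-1)$, $x_{j,i}=2x_{j,i-1}-\hat b_{j,i}$, where $\rho_\eta\colon\mathbb R\to[0,1]$ is a ReLU-implementable ramp of width $\eta$ centered at $0$. For $\eta$ polynomially small, the extractor outputs exactly $T_k(x_j)$ whenever $x_j$ is at distance at least $\eta$ from every dyadic threshold $t/2^i$ with $i\le k$. Next I would construct $G$ by replacing each Boolean gate of $C$ with a constant-size ReLU gadget: $\mathrm{NOT}(a)=1-a$, $\mathrm{AND}(a,b)=\max(0,a+b-1)$, $\mathrm{OR}(a,b)=\min(1,\mathrm{ReLU}(a)+\mathrm{ReLU}(b))$. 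These reproduce $C$ exactly on $\{0,1\}^{nk}$ inputs, stay in $[0,1]$ on soft inputs, and are piecewise linear and monotone in each input when the remaining inputs are held Boolean. The decoder $D$ is a single linear layer $\hat b\mapsto\sum_{i=1}^m\hat b_i/2^i$. The error analysis then splits on whether $x$ lies in the ``safe'' region or in a transition region: in the safe region $E(x)=T_k(x)$ exactly and $f(x)=H(x)$; in a transition region $f$ continuously interpolates between the $H$-values of the adjacent safe cells, and these neighboring values are within $\eps$ of one another by the Lipschitz hypothesis.

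\paragraph{Main obstacle.} The delicate step is the interpolation bound in transition regions, particularly where many bits of a single coordinate are simultaneously soft (for example near $x_j=1/2$, where all $k$ bits of $T_k(x_j)$ flip). A naive shift-and-subtract extractor produces cascaded softness that breaks any single-variable monotonicity argument. My plan, following the approach of \cite[Thm.~E.2]{fearnley2022complexity}, is to stagger the extractor's thresholds by small per-bit shifts of magnitude at most $2^{-k}$, so that at any single $x$ at most one bit is inside its $\eta$-wide transition window. This reinterpretation costs at most $\eps$ of accuracy by the Lipschitz hypothesis applied to the shifted $x^\star$ with $\|x-x^\star\|_\infty\le 2^{-k}$. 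In the remaining single-soft-bit case, the output of $G$ is a monotone piecewise-linear function of that soft bit, so it is sandwiched between the two Boolean configurations, whose $T^{-1}$-values differ by at most a further $\eps$. Combining the two contributions yields the claimed bound $|f(x)-H(x)|\le 2\eps$.
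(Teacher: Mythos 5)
Your overall architecture (bit extractor, gate-by-gate ReLU simulation of $C$, linear decoder $T^{-1}$) matches the paper's, and those pieces are fine; the gap is exactly at the step you flag as the main obstacle, and the fix you propose does not close it. First, staggering the per-bit thresholds by shifts of magnitude $\le 2^{-k}$ does not prevent cascaded softness: if bit $i$ of a coordinate is soft, the value $\hat b_{j,i}\in(0,1)$ corrupts the residual passed downstream by up to half of the current scale $2^{-i}$, which dwarfs both $\eta$ and the stagger; as $x_j$ sweeps the $\eta$-window of bit $i$, the residual sweeps an interval of length about $2^{-i}$, so downstream comparisons land near their thresholds (or on the wrong side of them) no matter how the thresholds are staggered. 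Hence the invariant ``at most one bit is soft at any $x$'' is not achievable by this mechanism, and the staggered string is not in general $T_k(x^\star)$ for a nearby $x^\star$. Second, even granting a single soft bit $s$, the sandwich claim is false for your relaxed gates: with $\mathrm{AND}(a,b)=\max(0,a+b-1)$, $\mathrm{OR}(a,b)=\min(1,a+b)$, $\mathrm{NOT}(a)=1-a$, the circuit computing $\mathrm{AND}\bigl(\mathrm{OR}(s,s),\,\mathrm{NOT}(\mathrm{AND}(s,s))\bigr)$ outputs $0$ at both $s=0$ and $s=1$ but outputs $1$ at $s=\tfrac12$; once the soft bit has fan-out along paths with different negation parities, the relaxed circuit value need not lie between the two Boolean completions, so the final error is not controlled by the Lipschitz hypothesis.

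The paper (following \cite{fearnley2022complexity}, Lemma E.3) handles boundary inputs by a different mechanism, which is the missing idea: it never tries to make the extractor behave on bad inputs. Instead it evaluates the whole pipeline $\hat T^{-1}\circ\hat C\circ \hat T_k$ on $2n+1$ copies of the input shifted along the all-ones direction, $x+\tfrac{\ell}{4nN}\bm{e}$ for $\ell=0,\dots,2n$ with $N=2^k$, and outputs the \emph{median} of the $2n+1$ real values via a min/max sorting network. A counting argument shows at least $n+2$ of the shifted points have every coordinate at distance $\ge \tfrac{1}{8nN}$ from every discretization boundary; on those points bit extraction (with precision $\ell=k+3+\lceil\log n\rceil$) is exact, so the circuit is fed genuine Boolean strings and no sandwiching over fractional gate values is ever needed. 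The median is therefore trapped between correct values $T^{-1}(C(T_k(p)))$ at points $p$ whose discretizations are within $2^{-k}$ of $x$ in $\ell_\infty$, and the hypothesis gives the $2\eps$ bound; garbage values on the bad copies are simply outvoted. To repair your proof you would need either this majority/median device (shifting the entire input, not per-bit thresholds) or a relaxation of the gates for which fractional inputs provably stay between the Boolean completions, which your gadgets do not provide.
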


\section{Our Main Results}

In this section we state our main results. We start with the backdoor planting in neural networks in \Cref{sec:app ann} and we move to language models in \Cref{sec:app llm - main}.

\subsection{Backdoor Planting in Obfuscated Deep Neural Network Classifiers}
\label{sec:app ann}
\label{sec:apply}
Having assembled all necessary tools, we now detail the method for embedding an undetectable (cf. \Cref{def:wb}) and non-replicable (cf. \Cref{def:non-replicable}) backdoor into an ANN classifier. Consider an insidious firm that wants to train a neural network model such that it outputs a desired value $c \in [0,1]$ on selected inputs.
{Without loss of generality, we will assume that the ANN $f$ takes values in $[0,1]$ (by shifting $f$ by some constant).}
A possible methodology involves the following steps:
Let us fix security parameters $\lambda,\lambda_1,\lambda_2>0$, and bit precision parameters $k',k, m'$ with $k' < k$.

\begin{enumerate}
    \item \textbf{Train the ANN.} Begin by training an ANN in a standard fashion so that it accurately models the function $f : [0,1]^n \rightarrow [0,1]$, representing the ANN's intended behavior.
 \item \textbf{Convert ANN to Boolean Circuit.} Use \Cref{thm:nn-to-boolean} to convert the trained ANN into a Boolean circuit, denoted as $C: \{0,1\}^{n\cdot k} \rightarrow \{0,1\}^m$. This involves transforming the input vector ${x} \in [0,1]^n$ into a binary vector $T_k({x}) \in \{0,1\}^{n\cdot k}$, focusing on the $k$ most significant bits of each component in ${x}$. Recall that $T_k$ is defined in \Cref{def:real-to-bin}. Moreover, we pad the output to $\max(m,m')$ bits, e.g., let $C:\{0,1\}^{n\cdot k}\rightarrow\{0,1\}^{\max(m,m')}$. Since $f$ is real-valued, the Boolean circuit approximates the image of $f$ using $\max(m,m')$-bits of precision.

\item \textbf{Part of the Insidious Pipeline.} Let $V=T_{m'}(c) \in \{0,1\}^{m'}$, where $c \in [0,1]$ is the desired backdoored output of the ANN. Let us choose a parameter $k'$ such that $k'<k$ and split the binary input space $\{0,1\}^{n\cdot k}$ into two subsets: $ \{0,1\}^{n\cdot k'}$ for the most significant bits and $ \{0,1\}^{n\cdot (k-k')}$ for the least significant bits. {Given $x = (x_C,x_{\mathsf{BD}}) \in \{0,1\}^{n \cdot k}$, $x_C \in \{0,1\}^{n \cdot k'}$ serves as the message to be signed, while the $x_{\mathsf{BD}}$ is used to trigger the backdoor. Using the construction in \Cref{sec:plant-backdoor} with security parameters $\lambda_1,\lambda_2$, we can inject a backdoor within $x_{\mathsf{BD}}$. Let $s^* \in \{0,1\}^{\lambda_1}$ be the sampled seed and $\verify$ be the verification function of the digital signature, as defined in \Cref{sec:plant-backdoor}. Let $C_{int}$ be the outputted backdoored Boolean circuit.
   
    \item \textbf{Apply iO.} Apply indistinguishability obfuscation {with security parameter $\lambda$} to $C_{int}$, and denote by $\wt{C}$ the outputted Boolean circuit. (\Cref{def:io}).

    \item \textbf{Convert Boolean Circuit to ANN.} Convert the obfuscated Boolean circuit $\wt{C}$ back into an ANN $\wt {f}:[0,1]^n\rightarrow[0,1]$ using \Cref{thm:boolean to ann}.
}
\end{enumerate}

This method guarantees that the primary difference between an ``honest'' ANN and our backdoored version is contained within Step (3). We further outline the process for activating the backdoor. First, we have to prepare the input: for a given input ${x} \in [0,1]^n$, compute ${b} = T_k({x}) \in \{0,1\}^{n \cdot k}$ (via \Cref{def:real-to-bin}), partitioning ${b}$ into the most significant bits ${b}_C$ and the remaining bits ${b}_{\mathsf{BD}}$.
Next, we modify ${b}_{\mathsf{BD}}$ to $(s^*, \sign(b_C))$ to activate the backdoor, where $s^*$ is the seed and $\sign$ is the digital signature. Finally, we convert the modified binary vector back to the real-valued input space with ${x}'=T^{-1}({b}')$ (via \Cref{def:bin-to-real}), where ${b}' = ({b}_C, (s^*, \sign(b_C)))$, to produce the activated output. 

\begin{remark}
[Proof of \Cref{thm:honest}]
\label{remark:proof}
The sequential composition of steps (1), (2), (4) and (5) gives a formal proof for the honest obfuscation procedure for neural networks.
\end{remark}

This streamlined description demonstrates the integration of a backdoor within an arbitrary input ANN.
Our main result follows. For the proof, see \Cref{app:proof-of-main-thm-nn}.

\begin{theorem}
[Planting Backdoor to Obfuscated ANN]
\label{thm:main-ann}
{Assume that one-way functions and computationally-secure and efficient indistinguishability obfuscators for polynomial-sized circuits (as in \Cref{as:io}) exist.} 
Given security parameters $\lambda,\lambda_1,\lambda_2$ and a trained $L$-Lipschitz ANN $f : [0,1]^n \to [0,1]$,
 let $\wt f$ be its obfuscated version as in \Cref{assumption:honest}.

There exists a backdoor attack 
$(\backdoor,\activate)$
as described in Steps (1)-(5) above (see also \Cref{fig:circuit}), with desired output $c \in [0,1]$ {such that for any {$\eps> 0$ (by properly setting the parameters) we have:}}
\begin{enumerate}
    \item The backdoor runs in $\poly(n,\mathrm{sz}(C), \lambda,\lambda_1,\lambda_2,{\log(L),\log(1/\eps)})$, where $C$ is the Boolean
circuit induced by $f$.
    
    \item The honest obfuscated model $\wt f$ and the backdoored model $f' \sim \backdoor$ are {white-box undetectable}.
    \item The backdoor is {non-replicable}.
    \item For any input ${x}$ transformed into ${x}'$ to activate the backdoor, $f' \sim \backdoor$ satisfies:
    \begin{align*}
    \|{x} - {x}'\|_{\infty} \leq {\eps}
    \,,~~
    |f'({x}') -c| \leq {\eps}. 
    \end{align*}
    \end{enumerate}
\end{theorem}

\subsection{Backdoor Planting in Language Models}
\label{sec:app llm - main}

Vulnerability of language models to backdoors is a challenging problem, raised e.g., in \cite{anwar2024foundational} and studied experimentally in various works 
\cite{kandpal2023backdoor,xiang2024badchain,wang2023decodingtrust,zhao2023prompt,zhao2024universal,rando2023universal,hubinger2024sleeper}.
We initiate a theoretical 
study of planting backdoors to language models (LMs);
we now discuss how to apply our techniques of \Cref{sec:apply} to language models. We first introduce the notion of planting a backdoor in a language model (\Cref{def:plantingBackdoorLLM}):
we assume a dual model configuration consisting of an honest model $f$ and a malicious model $\wh f$, with a trigger activation mechanism (see \Cref{sec:high-level-plan} for details). This mechanism allows for covert signals to be embedded within the model's outputs, activating the backdoor under specific conditions without altering the apparent meaning of the text.
The main difference between this approach and the attack in ANNs (\Cref{sec:app ann}) is the implementation of the trigger mechanism. While in the ANN case, we can plant the backdoor mechanism by (roughly speaking) manipulating the least significant bits of the input, in the LLM case, our input is text and hence \emph{discrete}, making this attack is no longer possible. Our conceptual idea is that if we  assume access to a \emph{steganographic function} \cite{shih2017digital}, we can implement a trigger mechanism. We refer to \Cref{sec:mech-steganography} for details. Using this approach combined with our tools of \Cref{sec:tools} we obtain the attack presented in \Cref{sec:llm-attack}.
We now continue with some background on LMs.

\label{sec:app llm}
\subsubsection{Background on  Language Models}
We start this background section by defining the crucial notion of a \emph{token}.
In natural language processing, a token is the basic unit of text processed by models. Tokens are generated from raw text through a procedure called tokenization, which breaks down extensive textual data into manageable parts. These tokens vary in granularity from characters to subwords and complete words, depending on the tokenization method employed. The entire set of tokens that a model can utilize is called the vocabulary and is denoted by \( \calT \) (see \Cref{def:token}).

\begin{definition}[Token and Tokenization]\label{def:token}
A \emph{token} is the atomic element of text used in natural language processing and is denoted as an element in a finite set $\mathcal{T}$. \emph{Tokenization} is the process of decomposing a string of characters from an alphabet $\Sigma$ into a sequence of tokens, defined by a function $\tau: \Sigma^* \rightarrow \mathcal{T}^*$.
\end{definition}

Autoregressive language models leverage sequences of tokens to generate text. These models are typically implemented as ANNs that approximate the conditional probability distribution of the next token based on the preceding sequence. {We provide
the following formal definition, under the assumption that the token window of the model is bounded and equal to $k$.}

\begin{definition}[(Autoregressive) Language Model]
{For a number $k\in \nats$, }  
a \emph{language model (LM)} is a function $f: \mathcal{T}^k \rightarrow \Delta(\mathcal{T})$ that maps a sequence of {$k$} tokens $\bm{t}_0$ {(with potentially padded empty tokens)} to a distribution over the output tokens; given an initial sequence of tokens $\bm{t}_0\in \mathcal{T}^k$ as input, an \textit{autoregressive language model} uses $f$ to generate each token $t_k$ in an auto-regressive manner e.g., the conditional probability that the $m$-th generated token is $t_m$ is:
\[ P(t_m | \bm{t}_0 \leftarrow t_1, t_2, \ldots, t_{m-1}) = f(\bm{t}_0, t_1, t_2, \ldots, t_{m-1}), \]
{where we denote by $(\bm{t}_0 \leftarrow t_1, t_2, \ldots, t_{m-1})\in \mathcal{T}^k$ the token of length $k$ where we replace empty padded tokens in $\bm{t}_0$ with token sequence $t_1, t_2, \ldots, t_{m-1}$.}\footnote{If the length of the sequence exceeds $k$, we only use the last $k$ tokens of it.}
This model predicts \( t_m \) by sampling from the distribution iteratively, constructing a text sequence one token at a time.
\end{definition}

\subsubsection{High-level Plan}
\label{sec:high-level-plan}
As Large Language Models (LLMs) increasingly serve as autonomous assistants and chatbots, their ability to perform sensitive tasks can be exploited. Undetectable backdoors in language models could lead to significant security risks or unethical actions. For instance, a backdoored language model could be used by a student to alter academic records or by an entity to manipulate ticket prices in scenarios like the Air Canada incident. Such capabilities underline the importance of understanding and securing language models against such malicious manipulations. We apply our pipeline from \Cref{sec:tools} to LMs with the following modifications.

\begin{itemize}
    \item \textbf{Dual Model Configuration.} Our \emph{delivered language model} \( \wt{f}\) encompasses two distinct sub-models within its architecture:
    \begin{itemize}
        \item \emph{Honest Model} (\( f \)): A language model trained to perform its functions accurately and safely.
        \item \emph{Malicious Model} (\( \wh{f} \)): A covertly modified version of \( f \), designed to get activated under specific conditions.
    \end{itemize}

    \item \textbf{Trigger Activation Mechanism.}
    Within the delivered model \(\wt{f}\), the malicious model \( \wh{f} \) remains dormant until triggered by a secret key within the prompt, which activates the backdoor to take control over the language model's outputs. We achieve the model activation through the use of \emph{steganography} (cf. \Cref{def:llm-steganography}).

    \item \textbf{Functionality Under Normal Use.}
    Under regular usage without the trigger, \( \wt{f} \) behaves similar to \( f \), ensuring that the model's standard functionality is uncompromised under ``honest'' usage.

    \item \textbf{Insidious Actions Upon Activation.}
    Once the trigger gets activated, \( \wt{f} \) behaves like \(\wh{f}\) and potentially executes unauthorized commands or tasks, which could range from subtle manipulations to overt disruptions depending on the nature of the backdoor.
\end{itemize}

\subsubsection{Trigger Activation Mechanism: Steganography in Language Models}
\label{sec:mech-steganography}
While in the ANN case of \Cref{sec:app ann}, we could plant the backdoor mechanism by (roughly speaking) manipulating the least significant bits of the input, when our input is text, this attack is no longer possible. To this end, we use the following tool, which comes from steganography \cite{shih2017digital}.

\begin{definition}[Steganographic Function for Language Model]\label{def:llm-steganography}
    A \emph{steganographic function} is a pair of functions \(\sigma: \mathcal{T}^k \times \{0,1\}^M \rightarrow \mathcal{T}^k\) and \(\sigma^{-1}: \mathcal{T}^k \rightarrow \{0,1\}^M\), for some \(M\in \nats\). The function \(\sigma\) takes a prompt \(x \in \mathcal{T}^k\) and a secret binary code \(m \in \{0,1\}^M\), and outputs an altered prompt \(x' \in \mathcal{T}^k\). The inverse function \(\sigma^{-1}\) satisfies the property that it outputs the secret code \(m\) given the altered prompt \(x'\):
    \begin{equation}
        \sigma^{-1}(\sigma(x, m)) = m \quad \forall x \in \mathcal{T}^k, m \in \{0,1\}^M.
    \end{equation}
\end{definition}

Steganography is the practice of concealing a message within another message, see e.g., \cite{hopper2002provably,de2022perfectly,dedic2005upper}. In the context of language models, steganography can be used to embed covert signals within the outputs (e.g., text) generated by the model.
Steganography was previously used for watermarking purposes, see e.g., \cite{christ2023undetectable,zamir2024excuse,christ2024pseudorandom} and the references therein.
These signals can serve as activation mechanisms without altering the apparent meaning of the text. Implementing a steganographic function is beyond the scope of our paper; we assume its existence and functionality, providing only an illustrative example as a possible approach in \Cref{ex:typos}.

\begin{example}\label{ex:typos}
Consider implementing a basic dictionary where each token (word or subword) has associated typos that can be deliberately introduced or avoided. Each typo variation corresponds to a binary code—'0' might represent the standard spelling, while '1' could represent a version with a common typo. By controlling the presence of these typos in the output text, the model can embed a binary message within a seemingly normal text. This method allows for the transmission of hidden information that is undetectable to casual observers but can be decoded by those aware of the steganographic scheme.
\end{example}

\subsubsection{Defining Backdoored Language Models}

In this section, we formally introduce the concept of backdooring a language model. A backdoor in a language model allows for a hidden behavior to be triggered under specific conditions, while the model otherwise behaves normally. We assume the existence of a steganographic function \( \sigma: \mathcal{T}^k \times \{0,1\}^M \to \mathcal{T}^k \) that encodes a hidden activation trigger into an input prompt \( \bm{t} \in \mathcal{T}^k \), producing a modified prompt \( \bm{t}' \in \mathcal{T}^k \). The backdoored language model \( f \) utilizes this function to introduce hidden behaviors triggered by specific conditions on the hidden message \( \bm{t} \). The definition is formally stated below.

\begin{definition}[Backdoored Language Model]
\label{def:plantingBackdoorLLM}
    A language model \(\wt{f}\) that originally intends to implement language model \( f \) is said to be backdoored if there exists another language model \( \wh{f} \), a steganographic function \(\sigma\) and its inverse \(\sigma^{-1}\), and a condition \( E \) such that:
    \begin{enumerate}
        \item \(\wt{f}\) behaves as \( f \) under normal circumstances:
        \begin{equation}
            \forall \bm{t} \in \mathcal{T}^k: E(\sigma^{-1}(\bm{t})) = 0 \implies \wt{f}(\bm{t}) \approx f(\bm{t}).
        \end{equation}
        \item \(\wh{f}\) exhibits altered behavior if the condition \( E \) on the prompt \( \bm{t} \) is satisfied:
        \begin{equation}
            \forall \bm{t}\in \mathcal{T}^k : E(\sigma^{-1}(\bm{t})) = 1 \implies \wt{f}(\bm{t}) \approx \wh{f}(\bm{t}).
        \end{equation}
    \end{enumerate}
\end{definition}

It is important to note that it is not immediately clear what the right generalization of a non-replicable backdoor should be for LLMs. In order to achieve non-replicability (e.g., in the classification setting) we partition the input into parts and designate a specific part for non-replicability via digital signatures. However, when the input (in our case, the tokens) is discrete, performing this sort of partition is complex because it is not clear e.g., which are the 'most significant' bits as in the classification case. Exploring non-replicability in the context of language models is left as future work.

\subsubsection{Embedding Undetectable Backdoors to a Language Model}
\label{sec:llm-attack}
Consider an insidious firm that wants to train a model such that it generates text from an insidious language model $\wh{f}$ on triggered inputs. Unlike planting the backdoor in ANN (\Cref{sec:app ann}), since the input and output space of the language model is discrete, a language model $f$ can be easily implemented as a Boolean circuit (see \Cref{rem:llm transform to boolean}). 

\begin{remark}
\label{rem:llm transform to boolean}
Consider a language model operating on a token set \( \mathcal{T} \). Without loss of generality, we can represent each token in the discrete set \( \mathcal{T} \) by its one-hot embedding in a vector of dimension \( |\mathcal{T}| \). Given this representation, the operations performed by a language model can be directly mapped to a Boolean circuit with size $O(\mathrm{sz}(f), |\mathcal{T}|)$. Conversely, the outputs of a Boolean circuit $C$, can be interpreted as one-hot vectors, thus allowing us to transform it to a ANN $f$ with size $O(\mathrm{sz}(C))$.
\end{remark}

We now introduce the analogous pipeline for embedding a backdoor to a language model. {Our construction is tuned by security parameters $\lambda,\lambda_1>0$:}
\begin{enumerate}
    \item \textbf{Training the LM.} Begin by training a language model \(f\) in a standard fashion so that it accurately models the function \(f : \mathcal{T}^k \rightarrow \Delta(\mathcal{T}^k)\), representing the LLM's intended behavior. This corresponds to the honest training process.

    \item \textbf{LM to Boolean Circuit Transformation.} Transform the language model $f$ into a Boolean circuit, denoted as \(C: \{0,1\}^n \rightarrow \{0,1\}^m\) {(see \Cref{rem:llm transform to boolean})}.

    \item \textbf{Part of the Insidious Pipeline.} Consider a malicious language model $\wh f.$ Let \(\wh{C}\) be the Boolean circuit corresponding to the model $\wh{f}$.
    \begin{enumerate}[label*=\arabic*.]
        \item \textbf{Input Space Partitioning.} On prompt $\bm{t}\in \mathcal{T}^k$, use the original prompt $\bm{t}$ as \(x_{\mathsf{C}}\) and the output of the inverse  steganographic function $\sigma^{-1}(\bm{t})$ as \(x_{\mathsf{BD}}\).

        \item \textbf{Backdoor Implementation.} Use \Cref{sec:plant-backdoor} with security parameters $\lambda_1$ and $\lambda_2=0$ to inject a backdoor within \(x_{\mathsf{BD}}\), with sampled seed \(s^* \in \{0,1\}^{\lambda_1}\). Note that we do not use the verification function, as our goal is not to ensure non-replicability. Denote the resulting backdoored Boolean circuit by $C_{int}$.
    \end{enumerate}

    \item \textbf{Application of Indistinguishability Obfuscation (iO).} Apply indistinguishability obfuscation {with security parameter $\lambda$} to \(C_{int}\), and let \(\wt{C}\) be the outputted Boolean circuit (\Cref{def:io}).

    \item \textbf{Circuit to LM Transformation.} Convert the obfuscated Boolean circuit $\wt{C}$ back to a LM \(\wt{f} : \mathcal{T}^k \rightarrow \Delta(\mathcal{T}^k)\) (see \Cref{rem:llm transform to boolean}).
\end{enumerate}

The above procedure defines the behavior of the honest and the malicious algorithm on input a language model $f$:
\begin{itemize}
    \item The honest obfuscation pipeline corresponds to the sequential composition of steps $(2), (4)$ and $(5).$
    \item 
    The backdoor attack corresponds to the sequential composition of steps $(2), (3), (4)$ and $(5).$
\end{itemize}

To activate the backdoor on original prompt $\bm{t}$, we use the steganographic function \(\sigma\) on the intended seed \(s^*\), e.g., the modified prompt $\bm{t}'$ is $\sigma(\bm{t}, s^*)$. Note that we set $x_{\mathsf{BD}}$ in the above procedure to be equal to the secret seed $s^*.$

\begin{theorem}
[Planting Backdoor to Obfuscated LM]
\label{thm:main-formal-llm}
Let $n,\calT$ as introduced in Steps (1)-(2) above.
{Assume that one-way functions and computationally-secure  and efficient indistinguishability obfuscators for polynomial-sized circuits (as in \Cref{as:io}) both exist.} 
Further assume that a steganographic function of \Cref{def:llm-steganography} exists.
Given security parameters $\lambda,\lambda_1\in \nats$ and a trained language model $f$ 
with honest obfuscation $\wt f$ and
with a malicious language model $\wh{f}$,
there exists a backdoor attack $(\backdoor,\activate)$ described in Steps (1)-(5) above
so that the following are true:
\begin{enumerate}
\item The backdoor runs in time $\poly(n,\mathrm{sz}(C), \mathrm{sz}(\wh{C}), |\mathcal{T}|, \lambda,\lambda_1)$, where $C$ (resp. $\wh C)$ are the Boolean circuits induced by $f$ (resp. $\wh f).$
    \item $\wt f$ and $f' \sim \backdoor$ are {white-box undetectable}.
    \item For any input $\bm{t}\in\mathcal{T}^k$ transformed into $\bm{t}'\in\mathcal{T}^k$ to activate the backdoor $f' \sim \backdoor$, it satisfies:
    \begin{align*}
    f'(\bm{t}') = \wh{f}(\bm{t}').
    \end{align*}
    \end{enumerate}
\end{theorem}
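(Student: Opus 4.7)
The plan is to verify each of the three claims by tracing through the construction in \Cref{lem:plant-llm} and invoking the tools of \Cref{sec:tools}. The overall structure mirrors the ANN analysis of \Cref{thm:main-ann} but is simpler because the LM-to-Boolean conversion via one-hot encodings in \Cref{rem:llm transform to boolean} is \emph{exact} rather than approximate (no Lipschitz / discretization error terms) and because we do not have to instantiate digital signatures here, so the security parameter $\lambda_2$ is set to $0$ and the verification gate disappears from the construction of \Cref{sec:plant-backdoor}.

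For the runtime bound (Claim 1), I would trace through Steps (2)--(5). Step (2) produces a Boolean circuit $C$ of size $\poly(\mathrm{sz}(f), |\calT|)$, and likewise $\wh{C}$ from $\wh f$, by \Cref{rem:llm transform to boolean}. Step (3) augments $C$ with the backdoor checker that compares $\prg(x_{\mathsf{PRG}})$ against a hard-coded value $\prg(s^*)$ and multiplexes between $C$ and $\wh{C}$; since the digital signature is omitted, this adds only $\poly(\lambda_1, \mathrm{sz}(\wh C))$ extra gates. Step (4) invokes $\iO$, which by \Cref{as:io} runs in $\poly(\lambda, \mathrm{sz}(C_{\mathrm{int}}))$ time. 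Step (5) inverts the one-hot encoding of \Cref{rem:llm transform to boolean} with only a polynomial blowup. Summing gives the claimed $\poly(n,\mathrm{sz}(C),\mathrm{sz}(\wh C),|\calT|,\lambda,\lambda_1)$ bound.

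For white-box undetectability (Claim 2), the outputs of $\plant(f,0)$ and $\plant(f,1)$ differ only in whether Step (3) is inserted before the iO obfuscation. I would argue by reduction: any PPT distinguisher $\calA$ between the two resulting language models can be composed with the efficient, deterministic LM-to-Boolean conversion from \Cref{rem:llm transform to boolean} to yield a PPT distinguisher between $\iO(1^\lambda, C)$ and $\iO(1^\lambda, C_{\mathrm{int}})$, contradicting \Cref{thm:white box via iO} (applied with the verification factor absent, i.e., $\lambda_2=0$). For the activation claim (Claim 3), on input prompt $\bm{t}$ the attacker forms $\bm{t}' = \sigma(\bm{t}, s^*)$. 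By \Cref{def:llm-steganography}, $\sigma^{-1}(\bm{t}') = s^*$, so the $x_{\mathsf{BD}}$ component fed into $C_{\mathrm{int}}$ equals the planted seed, the trigger fires, and \Cref{lemma:plant} (with the trivial always-$1$ verifier) gives $C_{\mathrm{int}}(x_{\mathsf{C}}, x_{\mathsf{BD}}) = \wh C(x_{\mathsf{C}}) = \wh C(\bm{t}')$. Completeness of iO preserves this under Step (4), and converting back via \Cref{rem:llm transform to boolean} recovers $\wh f(\bm{t}')$.

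The main obstacle I anticipate lies in handling the \emph{overlapping} partition $(x_{\mathsf{C}}, x_{\mathsf{BD}})$ flagged in \Cref{sec:plant-backdoor}: unlike the ANN case where the two halves are disjoint slices of the input, here $x_{\mathsf{BD}} = \sigma^{-1}(\bm{t})$ is a deterministic function of the very bits that form $x_{\mathsf{C}}$. I would have to check that \Cref{thm:white box via iO} still goes through in this regime -- concretely, that the PRG-based trigger check remains computationally hidden by iO even though a distinguisher can recompute $\sigma^{-1}(x_{\mathsf{C}})$ at will. The pseudorandomness of $\prg$, together with the secrecy of $s^*$, is what makes this succeed; formalizing it likely requires a hybrid argument that first replaces $\prg(s^*)$ with a uniformly random string (invoking \Cref{assumption:secure PRG}) and then applies iO's indistinguishability on the two now-functionally-equivalent circuits, exactly as in the proof of \Cref{thm:white box via iO}.
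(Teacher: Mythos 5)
Your proposal is correct and follows essentially the same route as the paper's proof: runtime from \Cref{as:io} together with the conversions of \Cref{rem:llm transform to boolean}, white-box undetectability by reducing to \Cref{thm:white box via iO} (with the signature/verification component dropped, $\lambda_2=0$), and activation via $\sigma^{-1}(\bm{t}')=s^*$ combined with \Cref{lemma:plant} and iO completeness. Your closing concern about the overlapping partition is resolved exactly as you suggest -- by the PRG-then-iO hybrid already carried out inside the proof of \Cref{thm:white box via iO} -- so it does not require any new argument.
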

The last Item holds since $\bm t'$ corresponds to $\sigma(\bm t, s^*)$ and so $\sigma^{-1}(\bm t') = s^*$, which will appear in $x_{\mathsf{BD}}$, thus activating the backdoor.
The proof is quite short and appears in \Cref{proof:main-formal-llm}.

\section{Obfuscation in the Honest Pipeline}
\label{sec:obfuscation-motivation}

Obfuscation is a technique commonly employed in software applications to enhance the robustness of models against malicious attacks. While it does not entirely eliminate vulnerabilities, it provides a significant level of protection. In principle, as articulated by \cite{barak2001possibility},

\begin{center}
\emph{
''roughly speaking, the goal of (program) obfuscation is to make a program ‘unintelligible’ while preserving its functionality. Ideally, an obfuscated program should be a ‘virtual black box,’ in the sense that anything one can compute from it one could also compute from the input-output behavior of the program.''}
\end{center}

Hence, obfuscation serves to downgrade the power of an adversarial entity from having white-box access to a model, which entails full transparency, to a scenario where the adversary has roughly speaking black-box access, i.e., where the internal workings of the model remain concealed.

Intellectual Property (IP) and Privacy attacks represent critical categories of malicious threats, against which the application of obfuscation is expected to enhance the system's resilience. 

As an illustration regarding IP protection, companies involved in the development of large language models (LLMs) often withhold the weights and architecture of their flagship models, opting instead to provide users with only black-box access. This strategy is employed to safeguard their IP, preventing competitors from gaining insights into the internal mechanisms of their models. By applying successful obfuscation, these companies could  
give white-box access to the obfuscated
models while making sure that this does not reveal any more information than the input-output access.
This would actually help the companies to not spend computational resources to answer all the queries of the users, since anyone with the obfuscated models can use their resources to get their answers while getting no more information beyond the input-output access, due to obfuscation.

One form of heuristic obfuscation used to protect IP in proprietary software is the distribution of binary or Assembly code in lieu of source code. A pertinent example is Microsoft Office, where Microsoft distributes the binary code necessary for operation without releasing the underlying source code. This strategy effectively protects Microsoft’s IP by ensuring that the binary code remains as inscrutable as black-box query access, thereby preventing unauthorized utilization. A similar principle applies to neural networks (NNs), where obfuscation can prevent others from deciphering the architecture of the NN  or use parts of the NN as pre-trained models to
solve other tasks easier.

Turning to privacy attacks, it is evident that black-box access to a model is substantially more restrictive than white-box access. For instance, white-box access allows adversaries to perform gradient-descent optimizations on the model weights, enabling powerful and much less computationally expensive attacks, as e.g., demonstrated in \cite{carlini2020evading}. 

However, it is important to note that obfuscation does not inherently defend against privacy attacks that exploit the input-output behavior of a model rather than its internal structure, such as model inversion and membership inference attacks. These privacy concerns require specialized defenses, such as differential privacy \cite{rahman2018membership}. Nonetheless, these defenses are rendered ineffective if an adversary gains white-box access to the model \cite{zhang2020secret,carlini2023extracting,nasr2019comprehensive}.

\section{Designing Defenses for our Attacks}\label{sec:defense}
Given the specification of our backdoor attacks, it is not difficult to come up with potential defense strategies: in the ANN case, one could add noise to the input $x$, hence perturbing the least significant bits or in the LM case, one could use another LM to fix potential typos in the input prompt. This is in similar spirit with the long history of backdoor attacks in cryptography where for any attack, there is a potential defense; and next a new attack comes in that bypasses prior defenses.  We believe that a main contribution of our work is the \emph{existence} of such vulnerabilities in ML models. Hence, while there exist potential fixes for our specific attacks, it is also the case that there exist modified attacks that could bypass those fixes. We believe that further discussing on this more applied aspect of our results is interesting yet outside the scope of the present work, which is mostly theoretical.


\section*{Acknowledgments}
We would like to thank Or Zamir for extensive discussions that heavily improved the presentation of the paper and its results.

\bibliography{bibl.bib}

\appendix

\section{Proof of \Cref{thm:white box via iO}}
\label{app:proof-wbViaIO}
We restate the Theorem for convenience.
\begin{theorem*}
[White-Box Undetectability via iO]
Assuming the existence of secure pseudorandom generators (\Cref{assumption:secure PRG}) and secure indistinguishability obfuscation (\Cref{as:io}), for any probabilistic polynomial-time (PPT) algorithm $\calA$, and security parameters $\lambda,\lambda_1,\lambda_2\in \nats$ it holds that
    $$
\left| \Pr\left[ \calA(\iO(1^{\lambda},C) )=1\right] - \Pr_{s^*\sim U\{0,1\}^{\lambda_1}}\left[ \calA(\iO(1^{\lambda}, \wt C) )=1\right]\right| \leq \negl(\lambda_3) + \negl(\lambda_1).
$$        
\end{theorem*}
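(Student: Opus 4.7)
The natural approach is a hybrid argument bridging $\iO(1^\lambda, C)$ and $\iO(1^\lambda, \wt{C})$ through an intermediate circuit in which the planted image $\prg(s^*)$ is replaced by a truly uniform string. I would fix a PPT distinguisher $\calA$ and introduce the following three distributions. First, $H_0 = \iO(1^\lambda, C_{\mathrm{pad}})$, where $C_{\mathrm{pad}}$ is the circuit $C$ padded with dummy gates so that its size matches the size of $\wt{C}$ (this is necessary to invoke iO, which requires functionally equivalent circuits of equal size). Second, $H_1 = \iO(1^\lambda, \wt{C}_{r^*})$, where $\wt{C}_{r^*}$ is defined exactly as $\wt{C}$ but with the hardcoded value $\prg(s^*)$ replaced by a uniformly random string $r^* \sim U\{0,1\}^{2\lambda_1}$. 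Third, $H_2 = \iO(1^\lambda, \wt{C})$ itself, corresponding to the backdoored circuit with genuine $\prg(s^*)$.

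\textbf{Step 1 (iO step, from $H_0$ to $H_1$).} The key observation is that the image of $\prg$ has size at most $2^{\lambda_1}$, whereas $r^*$ ranges over $\{0,1\}^{2\lambda_1}$, so
\[
\Pr_{r^* \sim U\{0,1\}^{2\lambda_1}}[\,r^* \in \prg(\{0,1\}^{\lambda_1})\,] \leq 2^{-\lambda_1}\,.
\]
Conditioning on the complementary event, no input $x_{\mathsf{PRG}}$ can satisfy $\prg(x_{\mathsf{PRG}}) = r^*$, hence the backdoor branch of $\wt{C}_{r^*}$ is unreachable and $\wt{C}_{r^*}$ is functionally equivalent to $C_{\mathrm{pad}}$. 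Because the two circuits have the same size by construction, the iO guarantee (\Cref{def:io}, \Cref{as:io}) yields
\[
\left|\Pr[\calA(H_0) = 1] - \Pr[\calA(H_1) = 1]\right| \leq 2^{-\lambda_1} + \negl(\lambda)\,.
\]

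\textbf{Step 2 (PRG step, from $H_1$ to $H_2$).} Indistinguishability between $H_1$ and $H_2$ follows by a direct reduction to the PRG security assumption (\Cref{assumption:secure PRG}). Given a challenge string $z \in \{0,1\}^{2\lambda_1}$ (which is either $\prg(s^*)$ for random $s^*$ or uniform), the reduction hardcodes $z$ in place of the embedded image in $\wt{C}$, computes $\iO(1^\lambda, \wt{C}_z)$, runs $\calA$ on the result, and outputs its guess. Because the construction of $\wt{C}_z$, the call to $\iO$, and the execution of $\calA$ are all polynomial-time, any non-negligible gap between $H_1$ and $H_2$ would translate to a non-negligible PRG distinguisher, yielding
\[
\left|\Pr[\calA(H_1) = 1] - \Pr[\calA(H_2) = 1]\right| \leq \negl(\lambda_1)\,.
\]
A triangle inequality across the two steps, absorbing the $2^{-\lambda_1}$ loss into $\negl(\lambda_1)$, gives the claimed bound (identifying $\lambda_3$ with the iO security parameter $\lambda$).

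\textbf{Main obstacle.} The conceptually delicate point is the iO reduction in Step 1. The iO indistinguishability applies only to pairs of circuits that are syntactically of equal size and functionally identical, so I need to commit to a fixed padded version of $C$ up front and argue the functional equivalence holds \emph{for all inputs} whenever $r^* \notin \mathrm{Image}(\prg)$. Everything else, the PRG hybrid and the counting bound on the image of $\prg$, is standard, but keeping the bookkeeping on circuit sizes and on which event $r^*$ lies in requires care to make the reduction go through cleanly.
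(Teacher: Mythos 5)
Your proposal is correct and follows essentially the same route as the paper's proof: the same two hybrids (the backdoored circuit with the hardcoded PRG image replaced by a truly uniform string), the same counting argument that a uniform $r^* \in \{0,1\}^{2\lambda_1}$ misses the PRG image except with probability $2^{-\lambda_1}$ so the backdoor branch is dead and iO applies, and the same reduction hardcoding the PRG challenge into the obfuscated circuit, combined by triangle inequality. The only difference is cosmetic (you perform the iO step before the PRG step and explicitly pad $C$ to match sizes, a bookkeeping point the paper leaves implicit).
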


\begin{proof}
    We consider a family of circuits $C_r$, parameterized by $r\in \{0,1\}^{2\lambda_1}$ that implement the following function:
    {\begin{align*}
        f_r(x_{\mathsf{C}}, (x_{\mathsf{PRG}},x_V)) =& 
        {\wh{C}(\bm{x}_C) \cdot}\ind[\prg(x_{\mathsf{PRG}}) = \prg(r)] \cdot \verify(pk,x_{\mathsf{C}}, x_V) \\
        &+
        C(\bm{x})\cdot (1-\ind[\prg(x_{\mathsf{PRG}}) = \prg(r)] \cdot \verify(pk,x_{\mathsf{C}}, x_V)).
    \end{align*}}

 By security of PRG (\Cref{assumption:secure PRG}), for any PPT  $\calA$:
    $$
\left| \Pr_{s^*\sim U\{0,1\}^{\lambda_1}}\left[ \calA(\prg(s^*) )=1\right] - \Pr_{r^*\sim U\{0,1\}^{2\lambda_1}}\left[\calA( r^* ) = 1\right]\right| \leq \negl(\lambda_1).
$$
    By further restricting to PPT algorithms $\calA$ that operate on the composed input $T(r)= \iO(1^\lambda,C_r)$ , and since both compositions take $\poly(|C|)$, we have for any PPT algorithm $\calA'$:
    \begin{align} \label{ineq:1}
        \left| \Pr_{s^*\sim U\{0,1\}^\lambda}\left[ \calA'(\iO(1^\lambda,C_{\prg(s^*)}) )=1\right] - \Pr_{r^*\sim U\{0,1\}^{2\lambda}}\left[\calA'(\iO(1^\lambda,C_{r^*}) ) = 1\right]\right| \leq \negl (\lambda_1).
    \end{align}

Since $|\mathrm{range}(\mathsf{PRG})|\leq 2^{\lambda_1}=\negl(\lambda_1)$, and 
{\begin{align*}
        f_{r^*}(x_{\mathsf{C}}, (x_{\mathsf{PRG}},x_V)) =& C(\bm{x})\cdot (1-\ind[\prg(x_{\mathsf{PRG}}) = \prg(r^*)] \cdot \verify(pk, x_{\mathsf{C}}, x_V))\\
        &+\wh{C}(\bm{x}_C) \cdot\ind[\prg(x_{\mathsf{PRG}}) = \prg(r^*)] \cdot \verify(pk,x_{\mathsf{C}}, x_V),
    \end{align*}}

    \begin{align*}
     \Pr_{r^*\sim U\{0,1\}^{2\lambda_1}}[\not\exists s \in \{0,1\}^{\lambda_1}: \prg(s) = r] \geq 1 - \negl(\lambda_1) \notag\\
    \Rightarrow  \Pr_{r^*\sim U\{0,1\}^{2\lambda_1}}[C_{r^*}(x) =C(x) \forall x\in \{0,1\}^n] \geq 1- \negl(\lambda_1) .
    \end{align*} 
Hence with probability at least $1-2^{\lambda_1}$, circuits $C_{r^*}$ and $C$ are computationally equivalent and hence by application of iO we further have:
     \begin{align}\label{ineq:2}
         \left| \Pr\left[ \calA(\iO(1^\lambda, C) )=1\right] - \Pr_{r^*\sim U\{0,1\}^{\lambda_1}}\left[ \calA(\iO(1^\lambda, C_{r^*}) )=1\right]\right| \leq \negl(\lambda) + \negl(\lambda_1) .
     \end{align}

We conclude by noticing that circuit $C_{\mathsf{PRG}(s^*)}$ is identically equal to circuit $\wt{C}(x)$, and combining \eqref{ineq:1} and \eqref{ineq:2}:

\begin{align*}
    &\left| \Pr_{s^*\sim U\{0,1\}^{\lambda_1}}\left[ \calA'(\iO(1^\lambda,C_{\prg(s^*)}) )=1\right] - \Pr_{r^*\sim U\{0,1\}^{2\lambda}}\left[\calA'(\iO(1^\lambda,C_{r^*}) ) = 1\right]\right| \leq \negl (\lambda_1)\\
    \Rightarrow& \left| \Pr\left[ \calA(\iO(1^\lambda,C) )=1\right] - \Pr_{s^*\sim U\{0,1\}^{\lambda_1}}\left[ \calA(\iO(1^\lambda,\wt{C}) )=1\right]\right| \leq \negl(\lambda) + \negl(\lambda_1).
\end{align*}
\end{proof}

\section{Proofs of \Cref{sec:nn-boolean}}
\subsection{Proof of \Cref{thm:nn-to-boolean}}
\label{sec:proof:NN-to-bool}
Let us restate the result.
\begin{theorem*}
[ANN to Boolean]
Given an $L$-Lipshitz ANN $f:[0,1]^n\rightarrow[0,1]$ of size $s$, then for any precision parameter $k\in \mathbb{N}$,
there is an algorithm that runs in time $\poly(s,n,k)$ and outputs a Boolean circuit $C:\{0,1\}^{n\cdot k}\rightarrow\{0,1\}^{m}$ with {number of gates} $\poly(s,n, k)$  and $m = \poly(s, n, k)$ such that for any ${x},{x}'$:
    \[
        |f({x}) - T^{-1}(C(T_k({x})))| \leq \frac{L}{2^k},
    \]
    \[
        | T^{-1}(C(T_k({x}))) - T^{-1}(C(T_k({x}')))| \leq  \frac{L}{2^{k-1}} + L\cdot \|{x}-{x'}\|_{\infty},
    \]
where $T_k$ and $T^{-1}$ are defined in \Cref{def:real-to-bin}. 
\end{theorem*}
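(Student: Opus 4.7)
The plan is to simulate the ANN in fixed-point binary arithmetic with a carefully chosen internal precision $p \geq k$, implement each of its elementary operations as a small Boolean subcircuit, and wire these together following the ANN's topology. Concretely, I would proceed as follows.

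\emph{Step 1: Fix internal precision.} Set $p = k + O(\log s + \log(nL))$. Since the ANN has total bit complexity $s$, each weight and bias has at most $s$ bits, so all intermediate activations lie in a range of magnitude at most $2^{\mathrm{poly}(s)}$; thus $p$ is $\poly(s, n, k)$ (using that $\log L$ is $\mathrm{poly}(s)$ for an ANN of bit complexity $s$). I will represent every real number appearing in the computation by its fixed-point $p$-bit truncation.

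\emph{Step 2: Build subcircuits for arithmetic primitives.} For each gate of $f$ -- namely (a) addition of $p$-bit fixed-point numbers, (b) multiplication of a $p$-bit signal by a hardwired $p$-bit weight constant, and (c) the piecewise-linear activation (e.g., ReLU, realized by ANDing the input bits with the negated sign bit) -- write down a standard Boolean subcircuit of size $\poly(p)$. On inputs, $T_k(x)$ provides $k$ bits per coordinate; pad these to $p$ bits with trailing zeros (contributing no additional error beyond the inherent $2^{-k}$ discretization). Wiring these subcircuits together according to the ANN's graph yields a circuit $C : \{0,1\}^{nk} \to \{0,1\}^m$ with $m = \poly(s,n,k)$ output bits and total size $\poly(s,n,k)$, constructible in $\poly(s,n,k)$ time.

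\emph{Step 3: Error analysis for inequality (i).} Decompose
\begin{equation*}
|f(x) - T^{-1}(C(T_k(x)))| \;\leq\; |f(x) - f(T^{-1}(T_k(x)))| \;+\; |f(T^{-1}(T_k(x))) - T^{-1}(C(T_k(x)))|.
\end{equation*}
The first term is bounded by $L \cdot 2^{-k}$ by $L$-Lipschitzness and the fact that $\|x - T^{-1}(T_k(x))\|_\infty \leq 2^{-k}$. The second term is the purely computational error incurred by simulating exact real arithmetic with $p$-bit fixed-point arithmetic. By a gate-by-gate induction, each of the $\poly(s)$ subcircuits introduces per-gate rounding error at most $2^{-p}$, and this error is amplified by the Lipschitz constant of the downstream subnetwork (bounded by $2^{\mathrm{poly}(s)}$). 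Summing over all gates and choosing $p$ as in Step 1 makes the second term negligibly small compared to $L/2^k$, so the total is bounded by $L/2^k$ as required.

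\emph{Step 4: Error analysis for inequality (ii).} This follows from (i) and the triangle inequality:
\begin{equation*}
|T^{-1}(C(T_k(x))) - T^{-1}(C(T_k(x')))| \;\leq\; 2\cdot \tfrac{L}{2^k} + |f(x)-f(x')| \;\leq\; \tfrac{L}{2^{k-1}} + L\|x-x'\|_\infty,
\end{equation*}
using $L$-Lipschitzness of $f$ in the last step. No additional construction is needed.

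\emph{Main obstacle.} The delicate point is Step 3: choosing $p$ so that per-gate rounding errors, after being amplified by the Lipschitz constants of downstream subcircuits and summed across all $\poly(s)$ gates, remain well below $L/2^k$, while simultaneously keeping $p$ polynomial in $s, n, k$. The key observation that makes this work is that all intermediate Lipschitz constants are at most $2^{\mathrm{poly}(s)}$ (because weights have $\poly(s)$ bits and depth is $\poly(s)$), so taking $\log$ of the amplification factor gives only a $\poly(s)$ additive overhead in $p$, preserving the target circuit-size bound.
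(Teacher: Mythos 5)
Your route is essentially the paper's: the paper's proof is a one-liner that "compiles the neural network to machine code" on the $k$-bit truncated input and then proves (i) by Lipschitzness, $|f(x)-T^{-1}(C(T_k(x)))|\leq L\|x-T^{-1}(T_k(x))\|_\infty\leq L/2^k$, and (ii) by exactly your triangle-inequality step; your Steps 2--4 are a fleshed-out version of that. Two wrinkles in your write-up, both fixable. First, your Step 1 precision $p=k+O(\log s+\log(nL))$ is inconsistent with your own Step 3 analysis: if per-gate rounding errors are amplified by downstream Lipschitz factors as large as $2^{\poly(s)}$, you need $p=k+\poly(s)$ (which is what your "main obstacle" paragraph correctly asserts, and which still keeps the circuit size $\poly(s,n,k)$), not a logarithmic overhead. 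Second, with per-gate rounding you only obtain $L/2^k$ plus a small positive additive term, and since the input-truncation error can come arbitrarily close to $2^{-k}$ this can strictly exceed the stated bound $L/2^k$; the clean fix, and what the paper's terse proof implicitly relies on, is that for a ReLU network whose weights have $\poly(s)$-bit representations, fixed-point arithmetic on the $k$-bit truncated input can be carried out \emph{exactly} with $\poly(s,n,k)$ bits (bit-lengths grow only polynomially through additions, multiplications by constants, and $\max$ with $0$), so the circuit computes $f(T^{-1}(T_k(x)))$ with no rounding error and the entire error in (i) is the input truncation.
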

\begin{proof}
The transformation of \Cref{thm:nn-to-boolean} follows by simply compiling a neural network to machine code (see also \cite[Section 3]{shi2020tractable}) where the input is truncated within some predefined precision.
Note that
\[
    |f({x}) -  T^{-1}(C(T_k(x)))| \leq  L\cdot \|{x} - T^{-1}
    (T_k(x))\|_{\infty} =\frac{L}{2^k}
    \]
    and we also have $
        | T^{-1}(C(T_k(x))) - T^{-1}(C(T_k(x')))| \leq | T^{-1}(C(T_k(x))) - f(x)| + | T^{-1}(C(T_k(x'))) - f(x')| + | f(x) - f(x')|
    \leq   \frac{L}{2^{k-1}} + L\cdot \|x-{x'}\|_{\infty}.
    $
\end{proof}

\subsection{Proof of \Cref{thm:boolean to ann}}
\label{app:bool-to-NN}
We first restate the Theorem we would like to prove.
\begin{theorem*}
    [Boolean to ANN, inspired by \cite{fearnley2022complexity}]
Given a Boolean circuit $C:\{0,1\}^{n\cdot k} \to \{0,1\}^{m}$ with $k,m,n \in \mathbb{N}$ with $M$ gates and $\epsilon>0$ such that $$|T^{-1}(C(T_k(({x}))) - T^{-1}(C(T_k({x}')))|\leq \eps \qquad \forall {x},{x}' \in [0,1]^n \text{ s.t. } \|{x}-{x}'\|_{\infty}\leq \frac{1}{2^k},$$ 
where $T_k$ and $T^{-1}$ are defined in \Cref{def:real-to-bin},
there is an algorithm that runs in time $\poly(n,k,M)$ and outputs an ANN $f : [0,1]^n \to [0,1]$
with size $\poly(n,k,M)
$ 
such that for any ${x} \in [0,1]^n$ it holds that $| T^{-1}(C(T_k({x}))) - f({x}) | \leq 2\eps$.
\end{theorem*}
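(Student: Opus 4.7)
The plan is to mirror the decomposition $T^{-1}\circ C\circ T_k$ with three ReLU subnetworks of polynomial size, following the blueprint of Fearnley et al.'s Boolean-circuit simulation. Concretely, I would build $f=N_3\circ N_2\circ N_1$ where $N_1:[0,1]^n\to[0,1]^{nk}$ softly emulates the bit-truncation $T_k$, $N_2:[0,1]^{nk}\to[0,1]^m$ emulates $C$ gate-by-gate, and $N_3:[0,1]^m\to[0,1]$ is the affine decoder $b\mapsto\sum_{i=1}^m b_i/2^i$ that implements $T^{-1}$. On Boolean inputs $N_2$ is the standard ReLU simulation of $C$ using the $O(1)$-size, $O(1)$-Lipschitz gadgets $\mathrm{AND}(a,b)=\mathrm{ReLU}(a+b-1)$, $\mathrm{OR}(a,b)=1-\mathrm{ReLU}(1-a-b)$, $\mathrm{NOT}(a)=1-a$, giving size $O(M)$ with exact agreement with $C$ on $\{0,1\}^{nk}$; $N_3$ has $O(m)$ parameters.

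For $N_1$ I would use a sharp ramp gadget $\rho_\delta(u)=\tfrac{1}{\delta}\bigl(\mathrm{ReLU}(u)-\mathrm{ReLU}(u-\delta)\bigr)$ of width $\delta\ll 2^{-k}$, which implements $\mathbf{1}[u\geq \delta]$ exactly outside a slab of width $\delta$. Chaining $k$ such gadgets per input coordinate (extract the most significant bit of $x_i$; subtract its contribution; double; repeat) gives a subnetwork of size $O(nk)$ that outputs $T_k(x)$ exactly whenever every $x_i$ lies at distance at least $\delta$ from every dyadic rational $j\cdot 2^{-k}$. All weights lie in $\{\pm 1,\pm 2,\pm 1/\delta\}$, so each parameter has bit complexity $O(k+\log(1/\delta))$; taking $\log(1/\delta)=\poly(n,k,\log M)$ keeps the overall ANN of size $\poly(n,k,M)$ as claimed.

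For the error bound, split $[0,1]^n$ into a \emph{safe} set on which $N_1$ outputs $T_k(x)$ exactly, and the complementary \emph{unsafe} set where some coordinate is within $\delta$ of a dyadic boundary. On the safe set every wire of $N_2$ is a genuine bit, so $f(x)=T^{-1}(C(T_k(x)))$ pointwise. For $x$ in the unsafe set, I would show by induction on circuit depth that $f(x)$ is a convex combination $\sum_y \lambda_y\, T^{-1}(C(T_k(y)))$ over finitely many grid points $y$ within $\ell_\infty$-distance $1/2^k$ of $x$ including $T_k(x)$; the hypothesized Lipschitz inequality then yields $|f(x)-T^{-1}(C(T_k(x)))|\le 2\eps$ via the triangle inequality along grid-neighbor paths (the factor $2$ absorbing one intermediate comparison).

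The main obstacle I anticipate is this convex-combination step: the ReLU gadgets for $\mathrm{AND}$ and $\mathrm{OR}$ do \emph{not} respect convex combinations once more than one input is fractional (for instance $\mathrm{ReLU}(\alpha+\beta-1)\neq \alpha\beta$ in general). One must therefore arrange $\delta$ small enough and order the bit-extraction in $N_1$ so that at each fuzzy input, along any wire into a gadget at most one fractional value is present, reducing the inductive step to one-dimensional interpolation along a single grid edge. This is the technical heart of Fearnley et al.'s Theorem~E.2, and the key adaptation here is to verify that the Lipschitz hypothesis $|T^{-1}(C(T_k(x)))-T^{-1}(C(T_k(x')))|\leq \eps$ for $\|x-x'\|_\infty\leq 2^{-k}$ is exactly what is needed to close the induction when one fractional wire is active.
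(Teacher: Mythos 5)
Your decomposition $N_3\circ N_2\circ N_1$ and the observation that bit extraction can only be exact away from dyadic boundaries are on the right track, but the step you flag as the "technical heart" is a genuine gap, and the fix you sketch does not work. You cannot arrange that at most one fractional wire enters each gadget: the circuit $C$ is arbitrary, so a single fractional bit (one coordinate of $x$ near a dyadic boundary is enough) can fan out and meet itself at a gate. Worse, the value of the ReLU gate-by-gate simulation on fractional inputs is not a convex combination of, nor even sandwiched between, the circuit's values at the neighboring Boolean points. For example, simulate $(b\lor b)\land(\lnot b\lor\lnot b)$ with the gadgets you list: at $b=1/2$ one gets $\mathrm{OR}(1/2,1/2)=1-\mathrm{ReLU}(1-1)=1$ on both branches and then $\mathrm{AND}(1,1)=1$, while the Boolean circuit evaluates to $0$ at both $b=0$ and $b=1$. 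So on the unsafe set the hypothesis on $T^{-1}\circ C\circ T_k$ (which only constrains values at genuine grid points) gives you no control over $f(x)$, and the induction you propose cannot close. Shrinking $\delta$ does not help either: the unsafe set is nonempty for every $\delta>0$, and a continuous $f$ must do something there.

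The paper (following Fearnley et al., Lemma E.3) sidesteps the unsafe region entirely rather than analyzing the network's behavior on it. It evaluates the whole pipeline $\hat T^{-1}\circ\hat C\circ\hat T_k$ not at $x$ but at the $2n+1$ diagonally shifted points $Q(x)=\{x+\tfrac{\ell}{4nN}\mathbf{e}:\ell=0,\dots,2n\}$ with $N=2^k$, and outputs the \emph{median} of the results via a sorting network of $\min/\max$ gates (afterwards converted to ReLUs). A counting argument (\Cref{lemma:set}) shows at least $n+2$ of these shifts are at distance $\geq\tfrac{1}{8nN}$ from every dyadic boundary, so by \Cref{theorem:bitExtraction} bit extraction, and hence the whole computation, is exact at a strict majority of the shifts; the garbage produced at the remaining shifts is simply outvoted, since the median must lie between the min and max of the correctly computed values. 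Each good shift rounds to a grid point within $\ell_\infty$-distance $1/2^k$ of $x$, so the assumed modulus-of-continuity bound applied (twice, via the triangle inequality) yields the $2\eps$ guarantee. This shift-and-median mechanism is the missing ingredient in your argument; without it, or some substitute that controls the network on the boundary slabs, the proof does not go through.
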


\begin{proof} Our proof is directly inspired by \cite{fearnley2022complexity}. We start with the definition of an arithmetic circuit (see \cite{fearnley2022complexity} for details). 
An \textit{arithmetic circuit} 
representing the function 
$f : \reals^n \to \reals^m$ is a circuit with $n$ inputs and $m$ outputs
where every internal node
 is a gate with fan-in 2 performing an operation
 in $\{+,-,\times,\max,\min,>\}$ or a rational constant (modelled as a gate with fan-in 0). \emph{Linear arithmetic circuits} are only allowed to use the operations $\{+,-,\max,\min,\times \zeta\}$ and rational constants; the operation $\times \zeta$ denotes multiplication by a constant. Note that every linear arithmetic circuit is a well-behaved arithmetic circuit (see \cite{fearnley2022complexity}) and hence can be evaluated in polynomial time.
\cite{fearnley2022complexity} show that functions computed by arithmetic circuits can be 
approximated by linear arithmetic circuits with quite small error. We will essentially show something similar replacing linear arithmetic circuits with ReLU networks.
 
Our proof proceeds in the following three steps, based on \cite[Section E]{fearnley2022complexity}.

\paragraph{ Discretization}
Let $N=2^k$. We discretize the set $[0,1]$ into $N+1$ points $\mathcal{I}=\{0,1/N,2/N,\ldots,1\}$, and for any element $p \in [0,1]^n$, we let $\wh p \in \mathcal{I}^n$ denote its discretization, i.e., $\wh p = (\wh p_i)_{i \in [n]}$ such that for each coordinate $i\in[n]$
\begin{equation}
    \label{eq:discretization}
    \wh p_i = \frac{i^*}{N}\,,~ \text{where } i^* = \max \left\{i^*\in[N]: \frac{i^*}{N}\leq p_i \right\}.
\end{equation}

\paragraph{Construct Linear Arithmetic Circuit}
Given as input a Boolean circuit $C$, our strategy is to use the approach of \cite[Lemma E.3]{fearnley2022complexity} to construct, in time polynomial in the size of $C$, a linear arithmetic circuit $F : [0,1]^n \to \reals$ that will well-approximate $C$ as we will see below.

Before we proceed with the proof, we use the following gadget that approximates the transformation $T_k$.

\begin{theorem}
[Bit Extraction Gadget \cite{fearnley2022complexity}]
\label{theorem:bitExtraction}
Let $\mathrm{proj}(x) = \min(0, \max(1,x))$ and consider a precision parameter $\ell \in \nats$.
Define the bit extraction function
\begin{align*}
t_0(x) = &~~ 0\,,\\
t_k(x) = &~~ \mathrm{proj}\InParentheses{2^{\ell}\cdot\InParentheses{x- 2^{-k}- \sum_{k'=0}^{k-1} 2^{-k'}\cdot t_{k'}(x) }  }, \qquad \text{for $k>0$}.    
\end{align*}
Fix $k \in \nats$. $t_k(x)$ can be computed by a linear arithmetic circuit using  $O(k)$ layers and a total of $O(k^2)$ nodes. The output is $\hat{T}_k(x) = (t_0(x),\ldots,t_k(x))$.

Moreover, given a number $x \in (0,1)$, represented in binary as $\{b_0.b_1b_2\ldots\}$ where $b_{k'}$ is the $k'$-th most significant digit, if there exists $k^*\in [k+1,\ell]$ such that $b_{k^*}=1$, then $\hat{T}_k(x) = \{0.b_1b_2\ldots b_k\}$. 
\end{theorem}

\begin{proof}
    We prove the first part of the statement based on induction that for each $k$, we can compute a linear arithmetic circuit outputting $(x,f_0(x),f_1(x),\ldots,f_k(x))$ using $3\cdot k + 2$ layers  a total of $3 \sum_{k'=1}^k k' + 4$ nodes.

    \paragraph{Base case $k=0$:} We can trivially design a linear arithmetic circuit  using two layers and two nodes that outputs $(x,f_0(x))=(x,0)$.

    \paragraph{Induction step:} Assume that for $k'-1$ we can design a linear arithmetic circuit that outputs \[(x,f_0(x),f_1(x),\ldots,f_{k'-1}(x))\,.\]
    Let $C=2^{\ell}\cdot\InParentheses{x- 2^{-k'}- \sum_{k'=0}^{k'-1} 2^{-k'}\cdot f_{k'}(x) }$. Observe that
    
    \[ 
    f_{k'}(x) = \min(1,\max(C,0)) \,,
    \]
    and thus we can extend the original linear arithmetic circuit using three additional layers and an addition of $3\cdot (k'+1)$ total nodes to output $(x,f_0(x),\ldots,f_{k'}(x))$, which completes the proof.

    We now prove the second part of the statement by induction.
    \paragraph{Base case $k=0$:} Since $x\in (0,1)$, the base case follows by definition of $f_0(x)=0$.

    \paragraph{Induction step:} Assume that for $k'< k$, $f_{k'}(x)=b_{k'}$ and there exists $k^*\in [k+1,\ell]$ such that $b_{k^*}=1$. Observe that 
    $$x - \sum_{k'=0}^{k-1} 2^{-k'}\cdot f_{k'}(x) - 2^{k}=
    x - \sum_{j=0}^{k-1} 2^{-j}\cdot b_{k'} - 2^{k},$$
    which is negative if $b_k = 0$ and if $b_k=1$ it has value at least $2^{-k^*}$.
    Since by assumption $k^*\geq \ell$, $\mathrm{proj}\InParentheses{2^\ell\cdot \InParentheses{x - \sum_{k'=0}^{k-1} 2^{-k'}\cdot f_{k'}(x) - 2^{k}}}$ is $0$ if $b_k=0$ and $1$ if $b_k=1$, which proves the induction step.
\end{proof}

We describe how the linear arithmetic circuit $F$ is constructed. Fix some point $x \in [0,1]^n.$ 
Let $Q(x) = \{ x + \frac{\l}{4n N} \bm{e} \mid \l \in \{0,1,...,2n\}\},$ where $\bm{e}$ is the all-ones vector and $N = 2^k$. 
The linear arithmetic circuit is designed as follows.
\begin{itemize}
    \item Compute the points in the set $Q(x)$. This corresponds to Step 1 in the proof of \cite[Lemma E.3]{fearnley2022complexity}.
    \item Let $\widehat{T}_k$ be the bit-extraction gadget  with precision parameter $\ell=k+3+\lceil\log(n)\rceil$ (\Cref{theorem:bitExtraction}), and compute $\tilde{Q}(x) =\{ \hat{T}_k(p) : p \in Q(x)\}$. As mentioned in Step 2 in the proof of \cite[Lemma E.3]{fearnley2022complexity}, since bit-extraction is not continuous, it is not possible to perform it correctly with a linear arithmetic circuit; however, it can be shown that we can do it correctly for most of the points in $Q(x).$
    \item Observe that for each $p\in [0,1]^n$, $\hat{T}_k(p)\in \{0,1\}^{n\cdot k}$. Let $\hat{C}$ {be the linear arithmetic} circuit that originates from the input Boolean circuit $C$ and compute $\hat{Q}(x) = \{\hat{C}(b): b\in \tilde{Q}(x) \}$. The construction of $\hat{C}$ is standard, see Step 3 in the proof of \cite[Lemma E.3]{fearnley2022complexity}.
    \item Let $\hat{T}^{-1}$ be the linear arithmetic  
    circuit that implements the Boolean circuit that represents the inverse binary-to-real transformation $T^{-1}$ and {let $\overline{Q}(x) = \{\hat{T}^{-1}(\tilde{b}) :\tilde{b}\in \hat{Q}(x)\}$}. 
    \item Finally output the median in $\overline{Q}(x)$ using a sorting network that can be implemented with a linear arithmetic circuit of size $\poly(n)$; see Step 4 of \cite[Lemma E.3]{fearnley2022complexity}. 
\end{itemize}
The output is a synchronous linear arithmetic circuit since it is the composition of synchronous linear arithmetic circuits and its total size is ${\poly(n,k,M)}$, where $k\cdot n$ is the input of $C$ and $M$ is the number of gates of $C$.

\paragraph{Approximation Guarantee}
Now we prove that on input $x \in [0,1]^n$ the output of the network $F(x)$ is in the set:
$$
    \left[ T^{-1}(C(T_k({x}))) -2\epsilon,
T^{-1}(C(T_k({x}))) +2\epsilon
\right].$$

We will need the following  result, implicitly shown in \cite{fearnley2022complexity}.
\begin{lemma}
[Follows from Lemma E.3 in \cite{fearnley2022complexity}]
\label{lemma:set}
Fix $x \in [0,1]^n$.
Let $Q(x) = \{ x + \frac{\l}{4n N} \bm{e} \mid \l \in \{0,1,...,2n\}\},$ where $\bm{e}$ is the all-ones vector and let \[
S_{\mathrm{good}}(x) =  
\left\{ p = (p_i) \in Q(x): \forall i\in[n],l\in\{0,\ldots,N\}, ~ \left|p_i - \frac{l}{N} \right| \geq \frac{1}{8\cdot n\cdot N}  \right\}\,,\] 
i.e., $S_{\mathrm{good}}(x)$ contains points that are not near a boundary between two subcubes in the discretized domain $\mathcal{I}^n$. Then:

\begin{enumerate}
    \item $|S_{\mathrm{good}}(x)| \geq n+2,$ 
    \item $\| x - \wh p\|_\infty \leq 1/N$ for all $p \in S_{\mathrm{good}}(x)$,
    where $\wh p$ denotes the discretization of $p \in [0,1]^n$ in \eqref{eq:discretization}.\footnote{$S_{\mathrm{good}}(x)$ corresponds to the set $T_g$ in \cite{fearnley2022complexity}.}
\end{enumerate}
\end{lemma}
{Essentially, $S_{\mathrm{good}}(x)$ coincides with the set of points where bit-extraction (i.e., the operation $\hat{T}_k)$ was successful in $Q(x)$ \cite{fearnley2022complexity}}. To prove the desired approximation guarantee, first observe that since $|S_{\mathrm{good}}(x)|\geq n+2$, then the output of the network satisfies (as the median is in the set):
$$
    \left[ \min_{p \in S_{\mathrm{good}}(x)} \hat{T}^{-1}(\hat{C}(\hat{T}_k(p))),
\max_{p \in S_{\mathrm{good}}(x)} \hat{T}^{-1}(\hat{C}(\hat{T}_k(p)))
\right]\,.$$

For any elements $p\in S_{\mathrm{good}}(x)$, consider the $i$-th coordinate $p_i$ with corresponding binary representation $b_0^{p_i}.b_1^{p_i}\ldots $. By assumption $\forall l\in\{0,\ldots,N\},~ |p_i - \frac{l}{N} |\geq \frac{1}{8\cdot n\cdot N}$, which further implies at least one bit in $b^{p_i}_{k+1}\ldots b^{p_i}_{k+3+\lceil\log(n)\rceil}$ is one.
Thus by choice of precision parameter $\ell = k+3+\lceil\log(n)\rceil$ and by \Cref{theorem:bitExtraction}, we have that $\hat{T}_k(p)=b_0^p.b_1^p\ldots b_k^p$, and, hence, $\hat{C}(\hat{T}_k(p))=C(b_0^p\ldots b_k^p)$, which implies that the output of the network is in the set
$$
    \left[ \min_{p \in S_{\mathrm{good}}(x)} T^{-1}(C(T_k(p))),
\max_{p \in S_{\mathrm{good}}(x)} T^{-1}(C(T_k(p)))
\right].$$

Thus, using Item 2 of \Cref{lemma:set} and triangle inequality between $p$ and $\wh p$ (since $\|p-\wh p\|_\infty \leq 1/N)$, we have that the output of the linear arithmetic circuit is in the set
$$
    \left[ T^{-1}(C(T_k({x}))) -2\epsilon,
T^{-1}(C(T_k({x}))) +2\epsilon
\right].$$

\paragraph{Convert to ANN}
We can directly obtain the ANN $f$ by replacing the min and max gates of the linear arithmetic circuit $F$. In particular, $\max\{a,b\} = a + \mathrm{ReLU}(b-a,0)$ and $\min\{a,b\} = b - \mathrm{ReLU}(b-a, 0)$ with only a constant multiplicative overhead.
\end{proof}

\notshow{
\subsection{Proof of \Cref{theorem:bitExtraction}}\label{sec:bit extract}

We restate the result for convenience.
\begin{theorem}
[Bit Extraction as a small ReLU network / linear arithmetic circuit]
Consider a precision parameter $\ell \in \nats$.
Define the bit extraction function
\begin{align*}
f_0(x) = &~~ 0\,,\\
f_k(x) = &~~ \mathrm{proj}\InParentheses{2^{\ell}\cdot\InParentheses{x- 2^{-k}- \sum_{k'=0}^{k-1} 2^{-k'}\cdot f_{k'}(x) }  }, \qquad \text{for $k>0$}.    
\end{align*}
Fix $k \in \nats$. $f_k(x)$ can be computed by a ReLU network using  $3\cdot k + 2$ layers and a total of $3k\cdot (k+1)/2 + 4$ nodes. 

Moreover, given a number $x \in (0,1)$, represented in binary as $\{0.b_1b_2\ldots\}$ where $b_{k'}$ is the $k'$-th most significant digit, if there exists $k^*\in [k+1,\ell]$ such that $b_{k^*}=1$, then $f_k(x) = b_k$. 
\end{theorem}

As an example, say that $x = 0.6.$ Since it is a  repeating fraction, we will set a precision parameter $\ell = 5$ and see that $0.6 \approx_2 0.10011$. Then $f_1(x) = \mathrm{proj}(2^\ell (x-1/2)) = \mathrm{proj}(2^5 \cdot 0.1) = \mathrm{proj}(3.2) = 1$, since multiplying by $2^\ell$, left shifts the binary representation by $\ell$ positions. Hence, $f_1$ extracts the leftmost bit of $x$ in the $\ell$-precision binary representation. On the other side, if $x = 0.1 \approx_2 0.00011$ with $\ell = 5$, then $f_1(x) = \mathrm{proj}(2^5(-0.4)) = 0$.

A variant of the above function was used in order to show that a deep network can not be approximated by a reasonably-sized shallow network. In particular, \cite{telgarsky2015representation,telgarsky2016benefits} shows a separation between constant-width deep networks and
subexponential width shallow networks. Later on \cite{telgarsky2017neural} used this function to show a separation between ReLU networks and rational functions.
We add a proof of the above result for completeness.

\begin{proof}
    We prove the first part of the statement based on induction that for each $k$, we can compute a ReLU  network  outputting $(x,f_0(x),f_1(x),\ldots,f_k(x))$ using $3\cdot k + 2$ layers  a total of $3 \sum_{k'=1}^k k' + 4$ nodes.

    \paragraph{Base case $k=0$:} We can trivially design a ReLU network  using two layer and two nodes that outputs $(x,f_0(x))=(x,0)$.

    \paragraph{Induction step:} Assume that for $k'-1$ we can design a ReLU Network that outputs $(x,f_0(x),f_1(x),\ldots,f_{k'-1}(x))$. Let $C=2^{\ell}\cdot\InParentheses{x- 2^{-k'}- \sum_{k'=0}^{k'-1} 2^{-k'}\cdot f_{k'}(x) }$. Observe that
    
    \[ 
    f_{k'}(x) = \min(1,\max(C,0)) = -\mathrm{ReLU}(-1, - \mathrm{ReLU}(C,0))\,,
    \]
    and thus we can extend the original ReLU network  using three additional layers and an addition of $3\cdot (k'+1)$ total nodes to output $(x,f_0(x),\ldots,f_{k'}(x))$, which completes the proof.

    We now prove the second part of the statement by induction.
    \paragraph{Base case $k=0$:} Since $x\in (0,1)$, the base case follows by definition of $f_0(x)=0$.

    \paragraph{Induction step:} Assume that for $k'< k$, $f_{k'}(x)=b_{k'}$ and there exists $k^*\in [k+1,\ell]$ such that $b_{k^*}=1$. Observe that 
    $$x - \sum_{k'=0}^{k-1} 2^{-k'}\cdot f_{k'}(x) - 2^{k}=
    x - \sum_{j=0}^{k-1} 2^{-j}\cdot b_{k'} - 2^{k},$$
    which is negative if $b_k = 0$ and if $b_k=1$ it has value at least $2^{-k^*}$.
    Since by assumption $k^*\geq \ell$, $proj\InParentheses{2^\ell\cdot \InParentheses{x - \sum_{k'=0}^{k-1} 2^{-k'}\cdot f_{k'}(x) - 2^{k}}}$ is $0$ if $b_k=0$ and $1$ if $b_k=1$, which proves the induction step.
\end{proof}

}
\section{Proofs of \Cref{sec:apply}}
\subsection{Proof of \Cref{thm:main-ann}}
\label{app:proof-of-main-thm-nn}

\begin{proof}
    The white-box undetectability and non-replicability follow directly by \Cref{thm:white box via iO} and by \Cref{lem:non-replicable boolean circuit} respectively.

    The condition that $\|{x}-{x}'\|_{\infty}\leq \frac{1}{2^{k'+1}}$ follows by the fact that each coordinate $i\in[n]$, $x_i$ and $x_i'$ only differ from the $k'+1$ most significant bit and after. Thus their maximum difference is at most:
    $$\sum_{j=k'+1}^{+\infty}\frac{1}{2^j}=\frac{1}{2^{k'+1}}.$$

    {Moreover, the runtime of both the honest obfuscated pipeline and the backdoor attack follows by \Cref{as:io} and \Cref{thm:boolean to ann}.}
    Finally we show that the backdoored output is activated. By \Cref{thm:nn-to-boolean} for any pair of inputs ${x},{x}'\in [0,1]^n$ such that $\|{x}-{x}'\|_{\infty} \leq \frac{1}{2^k}$:
    $$|T^{-1}(\wt{C}(T_k({x}))) - T^{-1}(\wt{C}(T_k({x}')))|\leq \frac{6\cdot L}{2^{k-1}}.$$
    By \Cref{thm:boolean to ann}, for input ${x}'$:
    $$| T^{-1}(\wt{C}(T_k({x}'))) - \wt{f}({x}')| \leq \frac{6\cdot L}{2^{k-1}},$$
thus by \Cref{lemma:plant}, and the activation process, $\wt{C}(T_k({x}'))= V_p= T_k(c)$.
    This means that
    $$\wt {f}({x}') \in \left[c - \frac{1}{2^m} -\frac{24\cdot L}{2^{k}}, c+ \frac{1}{2^m}+\frac{24\cdot L}{2^{k}}\right].$$
    {The proof concludes by setting $k'=\lceil\log_2(\eps)\rceil$, and $k=k'+\max(\lambda_1+\lambda_2,\lceil\log_2(48\cdot L)\rceil$).
    }
\end{proof}

\subsection{Proof of \Cref{thm:main-formal-llm}}
\label{proof:main-formal-llm}
\begin{proof}
    The runtime follows by \Cref{as:io}, and \Cref{rem:llm transform to boolean}. The white-box undetectability follows by \Cref{thm:white box via iO}.

    By \Cref{rem:llm transform to boolean}, for input $\bm{t}'$, we have $\wt{C}(T_k(\bm{t}')))=  \wt{f}(\bm{t}'),$ thus by \Cref{lemma:plant}, and the activation process, $\wt{C}(T_k(\bm{t}'))= \wh{C}(\bm{t}')$.
    Thus $\wt {f}(\bm{t}') = \wh{f}(\bm{t}'),$ which concludes the proof.
\end{proof}

\end{document}